


 \documentclass[final,review,3pt,times]{elsarticle}


 \usepackage{etex}


\usepackage{amssymb}
\usepackage{amsmath}
\usepackage{amsfonts}
\usepackage{mathtools}
\usepackage{bm}
\usepackage{tikz}
\usetikzlibrary{trees,shapes}
\usepackage{pifont}
\usepackage{pgfplots}
\usepackage{subcaption}
\usepackage[all]{xy}

\usepackage{amsthm}

\newdefinition{example}{Example}
\newtheorem{theorem}{Theorem}
\newtheorem{lemma}{Lemma}
\newtheorem{proposition}{Proposition}
\newdefinition{definition}{Definition}
\newtheorem{corollary}{Corollary}

\newcommand{\cmark}{\ding{51}}%
\newcommand{\xmark}{\ding{55}}%
\newcommand{\T}{\mathbb{T}}

\newcommand{\G}{\mathcal{G}}
\newcommand{\D}{\textnormal{CD}}
\newcommand{\CD}{\textnormal{CD}}
\newcommand\independent{\protect\mathpalette{\protect\independenT}{\perp}}
\def\independenT#1#2{\mathrel{\rlap{$#1#2$}\mkern2mu{#1#2}}}
\usepackage{verbatim}

\usepackage{sansmath}
\usepackage{hyperref}
\usepackage{natbib}

\begin{document}

\begin{frontmatter}



\title{Sensitivity analysis, multilinearity and beyond}


\author{Manuele Leonelli}

\address{Departamento de Estat\'{i}stica, Universidade Federal do Rio de Janeiro, Rio de Janeiro, Brazil}
\author{Christiane G\"{o}rgen and Jim Q. Smith}

\address{Department of Statistics, The University of Warwick, Coventry, UK}

\begin{abstract}
Sensitivity methods for the analysis of the outputs of discrete Bayesian networks have been extensively studied and implemented in different software packages. These methods usually focus on the study of sensitivity functions and on the impact of a parameter change to the Chan-Darwiche distance. Although not fully recognized, the majority of these results  rely  heavily on the multilinear structure of atomic probabilities in terms of the conditional probability parameters associated with this type of network.  By defining a statistical model through the polynomial expression of its associated defining conditional probabilities, we develop here a unifying approach to sensitivity methods applicable to a large suite of models including extensions of Bayesian networks, for instance context-specific and dynamic ones.  Our algebraic approach enables us to prove that for models whose defining polynomial is multilinear both the Chan-Darwiche distance and any divergence in the family of $\phi$-divergences are minimized for a certain class of multi-parameter contemporaneous variations when parameters are proportionally covaried.  
\end{abstract}

\begin{keyword}
 Bayesian networks \sep CD distance  \sep Interpolating Polynomial \sep Sensitivity Analysis \sep $\phi$-divergences. 



\end{keyword}

\end{frontmatter}


\section{Introduction}
\label{sec:intro}
Many discrete statistical problems in a variety of domains are nowadays often modeled using \textit{Bayesian networks} (BNs) \cite{Pearl1988}. There are now thousands of practical applications of these models \cite{Aguilera2011, Cano2004, Heckerman1995, Jordan2004}, which have spawned many useful technical developments: including a variety of fast exact, approximate and symbolic propagation algorithms for the computation of probabilities that exploit the underlying graph structure \cite{Cowell2007,Dagum1993, Darwiche2003}. Some of these advances have been hard-wired into software \cite{Chan2002,Korb2010,Low2012} which has further increased the applicability and success of these methods.

However, BN modeling would not have experienced such a widespread application without tailored methodologies of \textit{model validation}, i.e. checking that a model produces outputs that are in line with current understanding, following a defensible and expected mechanism \cite{French2003, Pitchforth2013}. Such techniques are now well established for BN models \cite{Chen2012,Korb2010,Pitchforth2013,Pollino2007}. These are especially fundamental for expert elicited models, where both the probabilities and the covariance structure are defined from the suggestions of domains experts, following knowledge engineering protocols tailored to the BN's bulding process \cite{Neil2000,Rajabally2004}. We can broadly break down the validation process into two steps: the first concerns the auditing of the underlying graphical structure; the second, assuming the graph represents a user's beliefs, checks the impact of the numerical elicited probabilities within this parametric family on outputs of interest. The focus of this paper lies in this second validation phase, usually called a \textit{sensitivity analysis}.    

The most common investigation is the so-called \textit{one-way} sensitivity analysis, where the impacts of changes made to a single probability parameter are studied. Analyses where more than one parameter at a time is varied are usually referred to as \textit{multi-way}. In both cases a complete sensitivity analysis for discrete BNs often involves the study of \textit{Chan-Darwiche (CD) distances} \cite{Chan2002,Chan2004,Chan2005} and \textit{sensitivity functions} \cite{Coupe2002,Gaag2007}. The CD distance is used to quantify  global changes. It measures how the overall distribution behaves when one (or more) parameter is varied. A significant proportion of research has focused on identifying parameter changes such that the original and the \lq{v}aried\rq{} BN distributions are close in CD distance \cite{Chan2005,Renooij2014}. This is minimized when, after a single arbitrary parameter change, other covarying parameters, e.g. those from the same conditional distribution, have the same proportion of the residual probability mass as they originally had. Sensitivity functions, on the other hand, model local changes with respect to an output of interest. These describe how that output probability varies as one (or potentially more) parameter is allowed to be changed. Although both these concepts can be applied to generic Bayesian analyses, they have almost exclusively been discussed and applied only within the BN literature (see \cite{Chan2005a,Charitos2006,Charitos2006a,Renooij2012} for some exceptions). This is because the computations of both CD distances and sensitivity functions are particularly straightforward for BN models. 

In this paper we introduce a unifying comprehensive framework for certain multi-way analyses, usually called in the context of BNs \textit{single full conditional probability table (CPT) analyses} - where one parameter from each CPT of one vertex of a BN given each configurations of its parents is varied. Using the notion of an interpolating polynomial \cite{Pistone2001} we are able to describe a large variety of models based on their polynomial form. Then, given this algebraic carachterization, we demonstrate that one-way sensitivity methods defined for BNs can be generalized to single full CPT analyses for any model whose interpolating polynomial is multilinear, for example context-specific BNs \cite{Boutilier1996} and chain event graphs \cite{Smith2008}. Because of both the lack of theoretical results justifying their use and the increase in computational complexity, multi-way methods have not been extensively discussed in the literature: see \cite{Bolt2015,Chan2004,Gomez2013} for some exceptions. This paper aims at providing a comprehensive theoretical toolbox to start applying such analyses in practice.

Importantly, our polynomial approach enables us to prove that single full CPT analyses in any multilinear polynomial model are optimal under proportional covariation in the sense that the CD distance between the original and the varied distributions is minimized. The optimality of this covariation method has been an open problem in the sensitivity analysis literature for quite some time \cite{Chan2004,Renooij2014}. However, we are able to provide further theoretical justifications for the use of proportional covariation in single full CPT analyses. We demonstrate below that for any multilinear model this scheme minimizes not only the CD distance, but also any divergence in the family of $\phi$-divergences \cite{Ali1966,Csiszar1963}. The class of $\phi$-divergences include a very large number of divergences and distances (see e.g. \cite{Pardo2005} for a review), including the famous Kullback-Leibler (KL) divergence \cite{Kullback1951}. The application of KL distances in sensitivity analyses of BNs has been almost exclusively restricted to the case when the underlying distribution is assumed Gaussian \cite{Gomez2007,Gomez2013}, because in discrete BNs the computation of such a divergence requires more computational power than for CD distances. We will demonstrate below that this additional complexity is a feature shared by any divergence in the family of $\phi$-divergences.      

However, by studying sensitivity analysis from a polynomial point of view,  we are able to consider a much larger class of models for which such methods are very limited. We investigate the properties of one-way sensitivity analysis in models whose interpolating polynomial is not multilinear, which are usually associated to dynamic settings where probabilities are recursively defined. This difference gives us an even richer class of sensitivity functions as shown in \cite{Charitos2006,Charitos2006a,Renooij2012} for certain dynamic BN models, which are not simply linear but more generally polynomial. We further introduce a procedure to compute the CD distance in these models and demonstrate that no unique updating of covarying parameters lead to the smallest CD distance between the original and the varied distribution.

The paper is structured as follows. In Section \ref{sec:int} we define interpolating polynomials and demonstrate that many commonly used models entertain a polynomial representation. In Section \ref{sec:div} we review a variety of divergence measures. Section \ref{sec:multi} presents a variety of results for single full CPT sensitivity analyses in multilinear models. In Section \ref{sec:pol} the focus moves to non-multilinear models and one-way analyses. We conclude with a discussion.  

\section{Multilinear and polynomial parametric models}
\label{sec:int}
In this section we first provide a generic definition of a parametric statistical model together with the notion of interpolating polynomial. We then categorize parametric models according to the form of their interpolating polynomial and show that many commonly used models fall within two classes.
\subsection{Parametric models and interpolating polynomials}
Let $\bm{Y}=(Y_1,\dots,Y_m)$ be a random vector with an associated discrete and finite sample space $\bm{\mathbb{Y}}$, with $\#\mathbb{Y}=n$. Although  our methods straightforwardly applies when the entries of $\bm{Y}$ are random vectors, for ease of notation, we henceforth assume its elements are univariate.

\begin{definition} 
Denote by $\boldsymbol{p}_{\theta}=(p_{\theta}(\bm{y})~|~\bm{y}\in\bm{\mathbb{Y}})$ the vector of values of a probability mass function $p_{\theta}:\bm{\mathbb{Y}}\to[0,1]$ which depends on a choice of parameters $\theta\in\mathbb{R}^k$. The entries of $\bm{p}_\theta$ are called \emph{atomic probabilities} and the elements $\bm{y}\in\bm{\mathbb{Y}}$ \emph{atoms}.
\end{definition}

\begin{definition}
A discrete \emph{parametric statistical model} on $n\in\mathbb{N}$ atoms is a subset $\mathbb{P}_{\Psi}\subseteq\Delta_{n-1}$ of the $n-1$ dimensional probability simplex, where
\begin{equation}\label{eq:psi}
\Psi~:\quad\mathbb{R}^k\to\mathbb{P}_{\Psi},~ \theta\mapsto \boldsymbol{p}_{\theta},
\end{equation}
is a bijective map identifying a particular choice of parameters $\theta\in\mathbb{R}^k$ with one vector of atomic probabilities. The map $\Psi$ is called a \emph{parametrisation} of the model.
\label{def:par}
\end{definition}

The above definition is often encountered in the field of \textit{algebraic statistics}, where properties of statistical models are studied using techniques from algebraic geometry and commutative computer algebra, among others \cite{Drton2009,Riccomagno2009}. We next follow \cite{Gorgen2015b} in extending some standard terminology.

\begin{definition}
\label{def:mon}
A model $\mathbb{P}_{\Psi}\subseteq\Delta_{n-1}$ has a \emph{monomial parametrisation} if
\[
p_{\theta}(\bm{y})=\boldsymbol{\theta}^{\bm{\alpha}_{\bm{y}}},\quad\text{for all }\bm{y}\in\bm{\mathbb{Y}},
\]
where $\bm{\alpha}_{\bm{y}}\in\mathbb{N}_{0}^k$ denotes a vector of exponents and $\boldsymbol{\theta}^{\bm{\alpha}_{\bm{y}}}=\theta_{1}^{\alpha_{1,\bm{y}}}\cdots \theta_{k}^{\alpha_{k,\bm{y}}}$ is a monomial. Then equation (\ref{eq:psi}) is a monomial map and $\boldsymbol{\theta}^{\bm{\alpha}_{\bm{y}}}\in\mathbb{R}_k[\Theta]$, for all $\bm{y}\in\mathbb{Y}$. Here $\Theta=\{\theta_{1},\ldots,\theta_{k}\}$ is the set of indeterminates and $\mathbb{R}_k[\Theta]$ is the polynomial ring over the field $\mathbb{R}$.
\end{definition}

For models entertaining a monomial parametrisation the network polynomial we introduce in Definition \ref{def:netw} below concisely captures the model structure and provides a platform to answer inferential queries \cite{Darwiche2003, Gorgen2015a}.
 
 \begin{definition}
  \label{def:netw}
 The \emph{network polynomial} of a model $\mathbb{P}_{\Psi}$ with monomial parametrisation $\Psi$ is given by
 \begin{equation*}
 c_{\mathbb{P}_\Psi}(\theta,\lambda)=\sum_{\bm{y}\in\bm{\mathbb{Y}}}\lambda_{\bm{y}}\bm{\theta}^{\bm{\alpha}_{\bm{y}}},
 \end{equation*}
 where $\lambda_{\bm{y}}$ is an indicator function for the atom $\bm{y}$.
 \end{definition}
Probabilities of events in the underlying sigma-field can be computed from the network polynomial by setting equal to one the indicator function of atoms associated to that event. In the following it will be convenient to work with a special case of the network polynomial where all the indicator functions are set to one.
\begin{definition} 
The \emph{interpolating polynomial} of a model $\mathbb{P}_{\Psi}$ with monomial parametrisation $\Psi$ is given by the sum of all atomic probabilities,
\[
c_{\mathbb{P}_{\Psi}}(\theta)=\sum_{\bm{\alpha}\in\bm{\mathbb{A}}}\bm{\theta}^{\bm{\alpha}},
\]
where $\mathbb{A}=\{\bm{\alpha}_{\bm{y}}\ | \ \bm{y}\in\mathbb{Y}\}\subset \mathbb{N}^k_0$.
\end{definition}

\subsection{Multilinear models}
In this work we will mostly focus on parametric models whose interpolating polynomial is multilinear.
\begin{definition}
We say that a parametric model $\mathbb{P}_{\Psi}$ is \emph{multilinear} if its associated interpolating polynomial is multilinear, i.e. if $\mathbb{A}\subseteq\{0,1\}^k$.\label{def:multi}
\end{definition}

We note here that a great portion of well-known non-dynamic graphical models are multilinear. We explicitly show below that this is the case for BNs and context-specific BNs \cite{Boutilier1996}. In \cite{Gorgen2015a} we showed that certain chain event graph models \cite{Smith2008} have multilinear interpolating polynomial.  In addition, decomposable undirected graphs and probabilistic chain graphs \cite{Lauritzen1996} can be defined to have a monomial parametrisation whose associated interpolating polynomial is multilinear. An example of models non entertaining a monomial parametrisation in terms of atomic probabilities are non-decomposable undirected graphs, since their joint distribution can be written as a rational function of multilinear functions \cite{Chan2005a}.  
 
\subsubsection{Bayesian networks}
For an $m\in\mathbb{N}$, let $[m]=\{1,\dots,m\}$. We denote with $Y_i$, $i\in[m]$, a generic discrete random variable and with $\mathbb{Y}_i=\{0,\dots,m_i\}$ its associated sample space. For an $A\subseteq[m]$, we let $\bm{Y}_A=(Y_i)_{i\in A}$ and $\bm{\mathbb{Y}}_A=\times_{i\in A}\mathbb{Y}_i$. Recall that for three random vectors $\bm{Y}_i$, $\bm{Y}_j$ and $\bm{Y}_k$, we say that $\bm{Y}_i$ is conditional independent of $\bm{Y}_j$ given $\bm{Y}_k$, and write $\bm{Y}_i\independent \bm{Y}_j\;|\; \bm{Y}_k$, if $\Pr(\bm{Y}_i=\bm{y}_i\;|\;\bm{Y}_j=\bm{y}_j,\bm{Y}_k=\bm{Y}_k)=\Pr(\bm{Y}_i=\bm{y}_i\;|\;\bm{Y}_k=\bm{Y}_k)$.
\begin{definition}
A BN over a discrete random vector $\bm{Y}_{[m]}$ consists of 
\begin{itemize}
\item $m-1$ \textit{conditional independence} statements of the form $Y_i\independent \bm{Y}_{[i-1]}\;|\, \bm{Y}_{\Pi_i}$, where $\Pi_i\subseteq [i-1]$;
\item a \textit{directed acyclic graph} (DAG) $\mathcal{G}$ with vertex set $V(\G)=\{Y_i: i \in [m]\}$ and edge set $E(\G)=\{(Y_i,Y_j):j\in[m],i\in\Pi_j\}$;
\item conditional probabilities  $\theta_{i_j\bm{\pi}}=\Pr(Y_i=j\;|\;\bm{Y}_{\Pi_i}=\bm{\pi})$ for every $j\in\mathbb{Y}_i$, $\bm{\pi}\in\bm{\mathbb{Y}}_{\Pi_i}$ and $i\in[m]$. 
\end{itemize}
\end{definition}
The vector $\bm{Y}_{\Pi_i}$, $i\in[m]$, includes the \textit{parents} of the vertex $Y_i$, i.e. those vertices $Y_j$ such that there is an edge $(Y_j,Y_i)$ in the DAG $\G$ of the BN.
 
From \cite{Chan2004} we know that for any atom $\bm{y}\in\bm{\mathbb{Y}}_{[m]}$ its associated monomial in the network polynomial can be written as
\[
p_\theta(\bm{y})=\prod_{\bm{y}\sim \{i_j,\bm{\pi}\}}\lambda_{i_j}\theta_{i_j\bm{\pi}},
\]
where $\sim$ denotes the compatibility relation among instantiations. 

\begin{lemma}
From \cite{Darwiche2003, Gorgen2015a}, the interpolating polynomial of a BN model can be written as
\begin{equation}
\label{eq:BNpol}
c_{\textnormal{BN}}(\theta)=\sum_{\bm{y}\in\bm{\mathbb{Y}}_{[m]}}\prod_{\bm{y}\sim \{i_j,\bm{\pi}\}}\theta_{i_j\bm{\pi}}.
\end{equation}
\end{lemma}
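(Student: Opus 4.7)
The plan is to derive the expression for $c_{\textnormal{BN}}(\theta)$ directly from the definition of the interpolating polynomial, using the factorisation of atomic probabilities induced by the BN's conditional independence statements. By Definition of the interpolating polynomial, $c_{\textnormal{BN}}(\theta) = \sum_{\bm{y}\in\bm{\mathbb{Y}}_{[m]}} p_\theta(\bm{y})$, so the task reduces to showing that for each atom $\bm{y}$, the atomic probability equals the product $\prod_{\bm{y}\sim\{i_j,\bm{\pi}\}}\theta_{i_j\bm{\pi}}$.

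First I would fix an atom $\bm{y}=(y_1,\ldots,y_m)\in\bm{\mathbb{Y}}_{[m]}$ and apply the chain rule of probability in the order compatible with the DAG $\G$, obtaining $p_\theta(\bm{y}) = \prod_{i=1}^m \Pr(Y_i=y_i\mid\bm{Y}_{[i-1]}=\bm{y}_{[i-1]})$. Next, I would invoke the conditional independence statements $Y_i\independent\bm{Y}_{[i-1]}\mid\bm{Y}_{\Pi_i}$ for each $i\in[m]$ to reduce each conditional probability to one of the form $\Pr(Y_i=y_i\mid\bm{Y}_{\Pi_i}=\bm{\pi}(\bm{y}))$, where $\bm{\pi}(\bm{y})$ denotes the instantiation of the parent set $\Pi_i$ read off from $\bm{y}$. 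Identifying this conditional probability with the parameter $\theta_{i_{y_i}\bm{\pi}(\bm{y})}$ from the BN's specification yields $p_\theta(\bm{y}) = \prod_{i=1}^m\theta_{i_{y_i}\bm{\pi}(\bm{y})}$, which can be rewritten as the product over all parameter indices $\{i_j,\bm{\pi}\}$ compatible with the atom $\bm{y}$, i.e. $\prod_{\bm{y}\sim\{i_j,\bm{\pi}\}}\theta_{i_j\bm{\pi}}$. This is precisely the expression for $p_\theta(\bm{y})$ quoted immediately before the lemma from \cite{Chan2004}, with the indicator $\lambda_{i_j}$ suppressed because we are working with the interpolating polynomial. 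Summing over $\bm{y}\in\bm{\mathbb{Y}}_{[m]}$ then yields \eqref{eq:BNpol}.

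The argument is essentially a reorganisation of standard facts, so there is no real obstacle; the only point requiring minor care is to verify that the compatibility relation $\bm{y}\sim\{i_j,\bm{\pi}\}$ produces exactly one factor per vertex $i\in[m]$, which follows because for each $i$ there is a unique $j\in\mathbb{Y}_i$ with $y_i=j$ and a unique $\bm{\pi}\in\bm{\mathbb{Y}}_{\Pi_i}$ matching the parent configuration in $\bm{y}$. As a by-product this establishes multilinearity of $c_{\textnormal{BN}}(\theta)$ in the sense of Definition \ref{def:multi}, since each parameter $\theta_{i_j\bm{\pi}}$ appears with exponent $0$ or $1$ in every monomial.
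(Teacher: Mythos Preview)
Your derivation is correct. Note, however, that the paper does not actually supply a proof for this lemma: it states the monomial factorisation $p_\theta(\bm{y})=\prod_{\bm{y}\sim\{i_j,\bm{\pi}\}}\lambda_{i_j}\theta_{i_j\bm{\pi}}$ as a fact from \cite{Chan2004} and then records the lemma as a direct consequence, citing \cite{Darwiche2003,Gorgen2015a} for the interpolating polynomial form. Your argument---chain rule, then invoking $Y_i\independent\bm{Y}_{[i-1]}\mid\bm{Y}_{\Pi_i}$ to collapse each conditional to $\theta_{i_{y_i}\bm{\pi}(\bm{y})}$, then summing over atoms---is exactly the standard justification behind those cited results, so there is no divergence in approach, only in level of detail.
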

From Equation (\ref{eq:BNpol}) we can immediately deduce the following.
\begin{proposition}
A BN is a multilinear parametric model, whose interpolating polynomial is homogeneous with monomials of degree $m$.
\end{proposition}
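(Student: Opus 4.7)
The plan is to inspect directly the monomials appearing in the interpolating polynomial given by equation (\ref{eq:BNpol}). For a fixed atom $\bm{y}=(y_1,\ldots,y_m)\in\bm{\mathbb{Y}}_{[m]}$, the compatibility relation $\bm{y}\sim\{i_j,\bm{\pi}\}$ picks out, for each index $i\in[m]$, the unique pair $(j,\bm{\pi})$ with $j=y_i$ and $\bm{\pi}=\bm{y}_{\Pi_i}$. Consequently
\[
\prod_{\bm{y}\sim\{i_j,\bm{\pi}\}}\theta_{i_j\bm{\pi}}=\prod_{i=1}^{m}\theta_{i_{y_i}\bm{y}_{\Pi_i}},
\]
so each atomic monomial is a product of exactly $m$ parameters, one for every vertex of the DAG.

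First I would read off homogeneity of degree $m$ from this factorisation: every summand in $c_{\textnormal{BN}}(\theta)$ is a product of exactly $m$ indeterminates, so all monomials have total degree $m$.

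Next I would establish multilinearity. Parameters indexed by distinct vertex indices $i\neq i'$ are distinct indeterminates in $\mathbb{R}_k[\Theta]$, and for a fixed $i$ the atom $\bm{y}$ singles out exactly one value of $j$ and one configuration $\bm{\pi}$. Therefore no indeterminate $\theta_{i_j\bm{\pi}}$ can appear twice in the same monomial, and the exponent vector $\bm{\alpha}_{\bm{y}}$ of the monomial associated with $\bm{y}$ has all entries in $\{0,1\}$. By Definition \ref{def:multi} this means $\mathbb{A}\subseteq\{0,1\}^k$ and the model is multilinear.

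There is no real obstacle: once the compatibility sum in (\ref{eq:BNpol}) is unfolded into the explicit product $\prod_{i=1}^{m}\theta_{i_{y_i}\bm{y}_{\Pi_i}}$, both assertions follow by counting factors and noting that the indexing $(i,y_i,\bm{y}_{\Pi_i})$ is determined by $\bm{y}$ and never repeats across $i\in[m]$.
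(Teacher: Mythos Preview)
Your argument is correct and matches the paper's own reasoning: the paper does not give an explicit proof but simply states that the proposition follows immediately from equation~(\ref{eq:BNpol}), and your unfolding of the compatibility relation into the product $\prod_{i=1}^{m}\theta_{i_{y_i}\bm{y}_{\Pi_i}}$ is precisely the observation that makes this immediate. You have supplied the detail the paper leaves implicit.
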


\begin{example}
\label{ex:BN}
Suppose a newborn is at risk of acquiring a disease and her parents are offered a screening test ($Y_1$) which can be either positive ($Y_1=1$) or negative ($Y_1=0$). Given that the newborn can either severely ($Y_2=2$) or mildly ($Y_2=1$) contract the disease or remain healthy ($Y_2=0$), her parents can then decide whether or not to give her a vaccine to prevent a relapse ($Y_3 = 1$ and $Y_3 = 0$, respectively). We assume that the parents' decision about the vaccine does not depend on the screening test if the newborn contracted the disease, and that the probability of being severely or mildly affected by the disease is equal for negative screening tests.
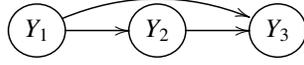
\begin{figure}
\entrymodifiers={++[o][F-]}
\centerline{
\xymatrix{
Y_1\ar[r]\ar@/^1pc/[rr]&Y_2\ar[r]&Y_3
}
}
\caption{A BN model for the medical problem in Example \ref{ex:BN}. \label{fig:BN}}
\end{figure}

The above situation can be described, with some loss of information, by the BN in Figure \ref{fig:BN}, with probabilities, for $i,k\in\{0,1\}$ and $j\in\{0,1,2\}$, 
\[
\Pr(Y_1=i)=\theta_{1_i},\quad \Pr(Y_2=j\,|\,Y_1=i)=\theta_{2_j1_i},\quad \Pr(Y_3=k\,|\,Y_2=j,Y_1=i)=\theta_{3_k2_j1_i}.
\]
 Its associated interpolating polynomial has degree $3$ and equals
\[
c_{\textnormal{BN}}(\theta)=\sum_{i=0}^1\sum_{j=0}^2\sum_{k=0}^1\theta_{1_i}\theta_{2_j1_i}\theta_{3_k2_j1_i}.
\]
\end{example}
\subsubsection{Context-specific Bayesian networks}
In practice it has been recognized that often conditional independence statements do not hold over the whole sample space of certain conditioning variables but only for a subset of this, usually referred to as a \textit{context}. A variety of methods have been introduced to embellish a BN with additional independence statements that hold only over contexts. A BN equipped with such embellishments is usually called \textit{context-specific BN}. Here we consider the representation known as \textit{ context specific independence (CSI)-trees} and introduced in \cite{Boutilier1996}. 

\begin{example}
\label{ex:CSBN}
Consider the medical problem in Example \ref{ex:BN}. Using the introduced notation, we notice that by assumption, for each $k=0,1$, the probabilities $\theta_{3_k2_11_i}$ are equal for all $i=0,1$ and called $\theta_{3_k2_11}$. Similarly, $\theta_{3_k2_21_i}$ are equal and called $\theta_{3_k2_21}$, $i,k=0,1$. Also $\theta_{2_21_0}=\theta_{2_11_0}$ are equal and called $\theta_{21_0}$. The first two constraints can be represented by the CSI-tree in Figure \ref{fig:CSI}, where the inner nodes are random variables and the leaves are entries of the CPTs of one vertex. The tree shows that, if $Y_2=1$ or $Y_2=2$ then no matter what the value of $Y_1$ is, the CPT for $Y_3=k$ will be equal to $\theta_{3_k2_11}$ and $\theta_{3_k2_11}$ respectively.
\begin{figure}
\centerline{
\xymatrix{
&&Y_2\ar[rdd]^{1}\ar[rrdd]^{2}\ar[ld]^0&\\
&Y_1\ar[dr]^{0}\ar[dl]^1&&\\
\theta_{3_k2_01_0}&&\theta_{3_k2_01_1}&\theta_{3_k2_11}&\theta_{3_k2_21}}}
\caption{CSI-tree associated to vertex $Y_3$ of the BN in Figure \ref{fig:BN} of Example \ref{ex:BN}, where $\theta_{3_k2_11}=\Pr(Y_3=k|Y_2=1)$, $\theta_{3_k2_01_1}=\Pr(Y_3=k|Y_2=0, Y_1=1)$ and $\theta_{3_k2_01}=\Pr(Y_3=k|Y_1=0,Y_2=0)=\Pr(Y_3|Y_1=2,Y_2=0)$.\label{fig:CSI}}
\end{figure}
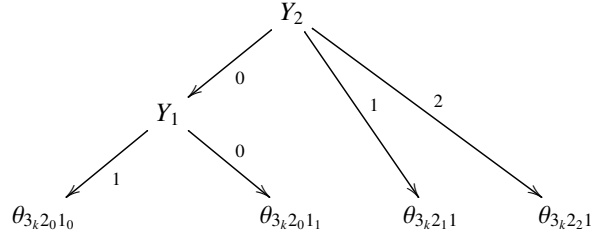
The last constraint cannot be represented by a CSI-tree and is usually referred to as a \textit{partial independence} \citep{Pensar2015}. In our polynomial approach, both partial and context-specific independences can be straightforwardly imposed in the interpolating polynomial representation of the model. In fact the interpolating polynomial for the model in this example corresponds to the polynomial in equation (\ref{eq:BNpol}) where the appropriate indeterminates are substituted with $\theta_{3_k2_11}$, $\theta_{3_k2_21}$ and $\theta_{21_0}$. This polynomial is again multilinear and homogeneous, just like for all context-specific BNs embellished with CSI-trees and partial independences.
\end{example}

\label{sec:asy}

We notice here that the interpolating polynomial of a multilinear model is not necessarily homogenous, as for example the one associated to certain chain event graph models, as shown in \cite{Gorgen2015a}.

\subsection{Non-multilinear models}
Having discussed multilinear models, we now introduce more general structures which are often encountered in dynamic settings. Although many more models have this property, for instance dynamic chain graphs \cite{Anacleto2013} and dynamic chain event graphs \cite{Barclay2015}, for the purposes of this paper we focus here on the most commonly used model class of \textit{dynamic Bayesian networks} (DBNs) \citep{Murphy2002}. In \cite{Gorgen2015a} we showed that the so-called non square-free chain event graph is also a non-multilinear model.
  
\subsubsection{Dynamic Bayesian networks}
 DBNs extend the BN framework to dynamic and stochastic domains. As often in practice, we consider only stationary, feed-forward DBNs respecting the first order Markov assumption with a finite horizon $T\in\mathbb{N}$, see e.g. \citep{Koller2001}. This assumes that probabilities do not vary when shifted on time (stationarity), that current states only depend on the previous time point (first-order Markov assumption) and that contemporaneous variable cannot directly affect each other (feed-forward). These DBNs can be simply described by an initial distribution over the first time point and a BN having as vertex set two generic time slices. Such latter BN is usually called 2-Time slice Bayesian Network (2-TBN).
Let $\{\bm{Y}(t)\}_{t\in[T]}=\{Y_i(t):i\in[m]\}_{t\in[T]}$ be a time series. 
\begin{definition}
\label{def:2-TBN}
A \emph{2-TBN} for a time series $\{\bm{Y}(t)\}_{t\in[T]}$ is a BN with DAG $\G$ such that $V(\G)=\{Y_i(t),Y_{i}(t+1):i\in[m]\}$ and its edge set is such that there are no edges $(Y_{i}(r),Y_{j}(r))$, $i,j\in[m]$, $r=t,t+1$, $t\in[T-1]$.   
\end{definition} 
    
\begin{definition}
A \emph{DBN} for a time series $\{\bm{Y}(t)\}_{t\in[T]}$ is a pair $(\G,\G')$, such that $\G$ is a BN with vertex set $V(\G)=\{Y_i(1):i\in[m]\}$, and $\G'$ is a 2-TBN such that its  vertex set $V(\G')$ is equal to $\{Y_i(t),Y_{i}(t+1):i\in[m]\}$.
\end{definition}

\begin{example}
\label{ex:DBN}
Consider the problem of Example \ref{ex:BN} and suppose the newborn can acquire the disease  once a year. Suppose further that the screening test and the vaccine are available for kids up to four years old. This scenario can be modeled by a DBN with time horizon $T=4$, where $Y_i(t)$, $i\in[3]$, $t\in[4]$, corresponds to the variable $Y_i$ of Example \ref{ex:BN} measured in the $t$-th year. Suppose that the probabilities of parents choosing the screening test and vaccination depend on whether or not the newborn acquired the disease in the previous year only. Furthermore, there is evidence that kids have a higher chance of contracting the disease if they were sick the previous year, whilst a lower chance if vaccination was chosen. This situation can be described by the DBN in Figure \ref{fig:DBN} where at time $t=1$ the correlation structure of the non-dynamic problem is assumed.

For a finite time horizon $T=4$, the interpolating polynomial has $2^8\cdot 3^4$ monomials each of degree $12$. To show that this polynomial is not multilinear, consider the event that the screening test is always positive, that the parents always decline vaccination and that the newborn gets mildly sick in her first three years of life, denoted as $\bm{y}_T=(Y_1(t)=1,Y_2(s)=1,Y_3(t)=0, t\in[4],s\in[3])$. Let the parameters for the first time slice be denoted as in Example \ref{ex:BN} and denote for $t=2,3,4$
\begin{align*}
\hat{\theta}_{1_i2_j}&=\Pr(Y_1(t)=i\ |\ Y_2(t-1)=j), \ \ i=0,1, \ \ j=0,1,2, \\
\hat{\theta}_{2_i2_j3_k}&=\Pr(Y_2(t)=i\ |\ Y_2(t-1)=j, Y_3(t-1)=k), \ \ k=0,1, \ \ i,j=0,1,2, \\
\hat{\theta}_{3_i2_j}&=\Pr(Y_3(t)=i\ |\ Y_2(t-1)=j), \ \ i=0,1, \ \ j=0,1,2.
\end{align*}
The interpolating polynomial for this event equals
\begin{equation}
c_{\textnormal{DBN}}(\theta,\bm{y}_{T})=\theta_{1_1}\theta_{2_11_1}\theta_{3_02_11_1}\hat{\theta}^3_{1_12_1}\hat{\theta}^3_{3_02_1}\left(\hat{\theta}_{2_12_13_0}^3+\hat{\theta}_{2_12_13_0}^2\hat{\theta}_{2_22_13_0}+\hat{\theta}_{2_12_13_0}^2\hat{\theta}_{2_02_13_0}\right),
\end{equation}
which has indeterminates of degree 3 and 2 and therefore is not multilinear.
\end{example}

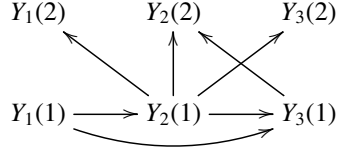
\begin{figure}
\centerline{
\xymatrix{
Y_1(2)&Y_2(2)&Y_3(2)\\
Y_1(1)\ar@/_1pc/[rr]\ar[r]&Y_2(1)\ar[r]\ar[u]\ar[ur]\ar[ul]&Y_3(1)\ar[lu]
}
}
\vspace{0.1cm}
\caption{A DBN having at time $t=1$ the DAG in Figure \ref{fig:BN} and a 2-TBN with an edge from $Y_2(t)$ to $Y_i(t+1)$, for $i\in[3]$, and the edge $(Y_3(t-1),Y_2(t))$.\label{fig:DBN}}
\end{figure}

Note that in the example above indeterminates can have degree up to to $T-1$, since this corresponds to the longest length of a path where the visited vertices can have probabilities that are identified in the \lq{u}nrolled\rq{} version of the DBN, i.e. one where the 2-TBN graph for time $t$ is recursively collated to the one of time $t-1$. From this observation the following follows.

\begin{proposition}
A DBN is a parametric model with monomial parametrisation, whose interpolating polynomial is homogeneous and each indeterminate can have degree lower or equal to $T-1$.
\end{proposition}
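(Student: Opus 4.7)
The plan is to reduce the claim to the Bayesian network case by \emph{unrolling} the DBN, and then to track how the stationarity assumption identifies parameters across time slices. First I would define the unrolled DAG $\tilde{\G}$ on vertex set $\{Y_i(t):i\in[m],t\in[T]\}$, obtained by collating $\G$ at $t=1$ with $T-1$ copies of the 2-TBN $\G'$, identifying the $t+1$ slice of one copy with the $t$ slice of the next. By construction $\tilde{\G}$ is a DAG (the feed-forward and first-order Markov assumptions prevent cycles), so it defines a standard BN on the random vector $(\bm Y(1),\dots,\bm Y(T))$. Applying the lemma on BN interpolating polynomials gives that each atomic probability of the DBN factorises as a product of conditional probability parameters, one for each of the $mT$ vertices in $\tilde{\G}$. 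This establishes a monomial parametrisation.

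Next I would invoke the stationarity assumption: the conditional probabilities $\Pr(Y_i(t)=j\mid \bm Y_{\Pi_i(t)}=\bm\pi)$ do not depend on $t$ for $t\geq 2$, so they are indexed by the same indeterminate across all $t\in\{2,\dots,T\}$. After this identification the map from parameters to atomic probabilities remains monomial (identification only merges indeterminates; it does not introduce sums), which proves the first part of the proposition. For homogeneity, observe that every atom's monomial is a product of exactly $mT$ factors, one per vertex of $\tilde{\G}$; parameter identification may raise exponents but cannot change the total degree. Hence the interpolating polynomial is homogeneous of degree $mT$.

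For the degree bound on an individual indeterminate, I would argue as follows. A stationary parameter $\hat\theta$ associated to a conditional probability in the 2-TBN $\G'$ is shared among the time slices $t=2,\dots,T$, so in the monomial of a single atom it can appear at most once per such slice, giving degree at most $T-1$. Parameters appearing only in the initial BN $\G$ (those attached to vertices $Y_i(1)$) appear in at most one factor per atom, hence with degree at most $1\leq T-1$ (for $T\geq 2$). Taking the maximum over both kinds of parameters yields the claimed bound.

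The main obstacle I anticipate is a clean bookkeeping argument: one must be careful that the unrolling really produces a BN with the correct monomial factorisation (in particular that no vertex appears as its own parent after identification), and that identifying stationary parameters preserves monomiality rather than accidentally creating sums of monomials. Once the unrolled $\tilde{\G}$ is in place the BN lemma does the heavy lifting, and the degree statements reduce to counting how many time slices a given indeterminate can appear in.
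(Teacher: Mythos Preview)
Your proposal is correct and matches the paper's approach: the paper does not give a formal proof of this proposition but simply remarks, just before stating it, that indeterminates can have degree up to $T-1$ because this corresponds to the longest path in the ``unrolled'' DBN obtained by recursively collating copies of the 2-TBN. Your unrolling argument, appeal to the BN lemma for monomiality and homogeneity (degree $mT$), and time-slice count for the degree bound are exactly a fleshed-out version of that remark, so there is nothing to add.
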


As for multilinear models, the interpolating polynomial of a non-multilinear model can be non-homogeneous. This is the case for example for certain non square-free chain event graphs.

\section{Divergence measures}
\label{sec:div}
In sensitivity analyses for discrete parametric statistical models we are often interested in studying how far apart from each other are two vectors of values of two probability mass functions $\bm{p}_{\theta}$ and $\bm{p}_{\tilde{\theta}}$ from the same model $\mathbb{P}_{\Psi}$. Divergence measures are used to quantify this dissimilarity between probability distributions. In this section we provide a brief introduction to these functions within the context of our discrete parametric probability models.

\begin{definition}
A \textit{divergence measure} $\mathcal{D}$ within a discrete parametric probability model $\mathbb{P}_{\Psi}$ is a function $\mathcal{D}(\cdot,\cdot):\mathbb{P}_{\Psi}\times\mathbb{P}_{\Psi}\rightarrow \mathbb{R}$ such that for all $\bm{p}_{\theta},\bm{p}_{\tilde{\theta}}\in\mathbb{P}_{\Psi}$:
\begin{itemize}
\item $\mathcal{D}(\bm{p}_{\theta},\bm{p}_{\tilde{\theta}})\geq 0$;
\item $\mathcal{D}(\bm{p}_{\theta},\bm{p}_{\tilde{\theta}})= 0$ iff $\bm{p}_{\theta}=\bm{p}_{\tilde{\theta}}$.
\end{itemize}
\end{definition}  
The larger the divergence between two probability mass functions $\bm{p}_{\theta}$ and $\bm{p}_{\tilde{\theta}}$, the more dissimilar these are. Notice that divergences are not formally metrics, since these do not have to be symmetric and respect the triangular inequality. We will refer to divergences with these two additional properties as \textit{distances}. 

The divergence most commonly used in practice is the KL divergence \cite{Kullback1951}.
\begin{definition}
The \textit{KL divergence} between $\bm{p}_{\theta},\bm{p}_{\tilde{\theta}}\in\mathbb{P}_{\Psi}$, $\mathcal{D}_{\textnormal{KL}}(\bm{p}_{\theta},\bm{p}_{\tilde{\theta}})$, is defined as
\begin{equation}
\mathcal{D}_{\textnormal{KL}}(\bm{p}_{\theta},\bm{p}_{\tilde{\theta}})=\sum_{\bm{y}\in\mathbb{Y}}p_{\theta}(\bm{y})\log\left(\frac{p_{\theta}(\bm{y})}{p_{\tilde{\theta}}(\bm{y})}\right),
\end{equation}
assuming $p_{\theta}(\bm{y}),p_{\tilde{\theta}}(\bm{y})>0$ for all $\bm{y}\in\mathbb{Y}$.
\end{definition} 
Notice that the KL divergence is not symmetric and thus $\mathcal{D}_{\textnormal{KL}}(\bm{p}_{\theta},\bm{p}_{\tilde{\theta}})\neq\mathcal{D}_{\textnormal{KL}}(\bm{p}_{\tilde{\theta}},\bm{p}_{\theta})$ in general. However both divergences can be shown to be a particular instance of a very general family of divergences, called $\phi$-divergences \cite{Ali1966,Csiszar1963}.
\begin{definition}
The $\phi$\textit{-divergence} between $\bm{p}_{\tilde{\theta}},\bm{p}_{\theta}\in\mathbb{P}_{\Psi}$, $\mathcal{D}_{\phi}(\bm{p}_{\tilde{\theta}},\bm{p}_{\theta})$, is defined as
\begin{equation}
\mathcal{D}_{\phi}(\bm{p}_{\tilde{\theta}},\bm{p}_{\theta})=\sum_{\bm{y}\in\mathbb{Y}}p_{\theta}(\bm{y})\phi\left(\frac{p_{\tilde{\theta}}(\bm{y})}{p_{\theta}(\bm{y})}\right), \hspace{0.5cm} \phi\in \Phi,
\end{equation}
where $\Phi$ is the class of convex functions $\phi(x)$, $x\geq 0$, such that $\phi(1)=0$, $0\phi(0/0)=0$  and $0\phi(x/0)=\lim_{x\rightarrow \infty}\phi(x)/x$. 
\end{definition}
So for example $\mathcal{D}_{\textnormal{KL}}(\bm{p}_{\theta},\bm{p}_{\tilde{\theta}})=\mathcal{D}_{\phi}(\bm{p}_{\tilde{\theta}},\bm{p}_{\theta})$ for $\phi(x)=-\log(x)$ and $\mathcal{D}_{\textnormal{KL}}(\bm{p}_{\tilde{\theta}},\bm{p}_{\theta})=\mathcal{D}_{\phi}(\bm{p}_{\tilde{\theta}},\bm{p}_{\theta})$ for $\phi(x)=x\log(x)$. Many other renowned divergences are in the family of $\phi$-divergences: for example $J$ divergences \cite{Jeffreys1946} and total variation distances (see \cite{Pardo2005} for a review).

The distance usually considered to study the dissimilarity of two probability mass functions in sensitivity analyses for discrete BNs is the aforementioned Chan-Darwiche distance. This distance is not a member of the $\phi$-divergence family.
\begin{definition}
The \textit{CD distance} between $\bm{p}_{\theta},\bm{p}_{\tilde{\theta}}\in\mathbb{P}_{\Psi}$, $\mathcal{D}_{\textnormal{CD}}(\bm{p}_{\theta},\bm{p}_{\tilde{\theta}})$, is defined as
\begin{equation}
\label{eq:CD}
\mathcal{D}_{\textnormal{CD}}(\bm{p}_\theta,\bm{p}_{\tilde{\theta}})=\log \max_{\bm{y}\in\mathbb{Y}}\frac{\bm{p}_{\tilde{\theta}}(\bm{y})}{\bm{p}_\theta(\bm{y})}-\log\min_{\bm{y}\in\mathbb{Y}} \frac{\bm{p}_{\tilde{\theta}}(\bm{y})}{\bm{p}_\theta(\bm{y})},
\end{equation}
where $0/0$ is defined as 1.
\end{definition}
 It has been noted that in sensitivity analysis in BNs, if one parameter of one CPT is varied, then the CD distance between the original and the varied BN equals the CD distance between the original and the varied CPT \cite{Chan2005}. This distributive property, and its associated computational simplicity, has lead to a wide use of the CD distance in sensitivity studies in discrete BNs.

\section{Sensitivity analysis in multilinear models}
\label{sec:multi}
We can now formalize sensitivity analysis techniques for multilinear parametric models. We focus on an extension of single full CPT analyses from BNs to generic multilinear models. Standard one-way sensitivity analyses can be seen as a special case of single full CPT analyses when only one parameter is allowed to be varied. We demonstrate in this section that all the results about one-way sensitivity analysis in BN models extend to single full CPT analyses in multilinear parametric models and therefore hold under much weaker assumptions about the structure of both the sample space and the underlying conditional independences. Before presenting these results we review the theory of \textit{covariation}. 
 
\subsection{Covariation}
\label{sec:covariation}
In one-way analyses one parameter within a parametrisation of a model is varied. When this is done, then \textit{some} of the remaining parameters need to be varied as well to respect the sum-to-one condition, so that the resulting measure is a probability measure. In the binary case this is straightforward, since the second parameter will be equal to one minus the other. But in generic discrete finite cases there are various considerations the user needs to take into account, as reviewed below.

Let $\theta_i\in\Theta$ be the parameter varied to $\tilde{\theta}_i$  and suppose this is associated to a random variable $Y_C$ in the random vector $\bm{Y}$. Let $\Theta_C=\{\theta_1,\dots,\theta_i,\dots,\theta_r\}\subseteq\Theta$ be the subset of the parameter set including $\theta_i$ describing the probability distribution of $Y_C$ and whose elements need to respect the sum to one condition. For instance $\Theta_C$ would include the entries of a CPT for a fixed combination of the parent variables in a BN model or the entries of a CPT associated to the conditional random variable from a leaf of a CSI-tree as in Figure \ref{fig:CSI}. Suppose further these parameters are indexed according to their values, i.e. $\theta_1\leq \cdots \leq \theta_i\leq\cdots \leq \theta_r$.  From \cite{Renooij2014} we then have the following definition.
\begin{definition}
\label{def:covar}
Let $\theta_i\in\Theta_C$ be varied to $\tilde{\theta}_i$. A \emph{covariation} scheme $\sigma(\theta_j,\tilde{\theta}_i):[0,1]^{2}\rightarrow[0,1]$ is a function that takes as input the value of both $\tilde{\theta}_i$ and $\theta_j\in\Theta_C$ and returns an updated value for $\theta_j$ denoted as $\tilde{\theta}_j$. 
\end{definition}

Different covariation schemes may entertain different properties which, depending on the domain of application, might be more or less desirable. We now list some of these properties from \cite{Renooij2014}.
\begin{definition}
\label{def:prop}
In the notation of Definition \ref{def:covar}, a covariation scheme $\sigma(\theta_j,\tilde{\theta}_i)$ is 
\begin{itemize}
\item \textit{valid}, if $\sum_{j\in[r]}\sigma(\theta_j,\tilde{\theta}_i)=1$;
\item \textit{impossibility preserving}, if for any parameter $\theta_j=0$, $j\neq i$, we have that $\sigma(\theta_j,\tilde{\theta}_i)=0$;
\item \textit{order preserving}, if $\sigma(\theta_1,\tilde{\theta}_i)\leq \cdots\leq \sigma(\theta_j,\tilde{\theta}_i)\leq \cdots\leq \sigma(\theta_r,\tilde{\theta}_i)$;
\item \textit{identity preserving}, if $\sigma(\theta_j,\theta_i)=\theta_j$, $\forall j\in[r]$;
\item \textit{linear}, if $\sigma(\theta_j,\tilde{\theta}_i)=\gamma_j\tilde{\theta}_i+\delta_j$, for $\gamma_{j}\in[0,1]$ and $\delta_j\in({-1},1)$.
\end{itemize}
\end{definition}
Of course any covariation scheme needs to be valid, otherwise the resulting measure is not a probability measure and any inference from the model would be misleading. Applying a linear scheme is very natural: if for instance $\delta_j=-\gamma_j$, then $\sigma(\theta_j,\tilde{\theta}_i)=\delta_j(1-\tilde{\theta}_i)$ and the scheme assigns a proportion $\delta_j$ of the remaining probability mass $1-\tilde{\theta}_i$ to the remaining parameters. Following \cite{Renooij2014} we now introduce a number of frequently applied  covariation schemes.
\begin{definition}
\label{def:schemes}
In the notation of Definition \ref{def:covar}, we define
\begin{itemize}
\item the \textit{proportional} covariation scheme, $\sigma_{\textnormal{pro}}(\theta_j,\tilde{\theta}_i)$, as
\[
\sigma_{\textnormal{pro}}(\theta_j,\tilde{\theta}_i)=\left\{
\begin{array}{ll}
\tilde{\theta}_i,& \mbox{if } j=i,\\
\frac{1-\tilde{\theta}_i}{1-\theta_i}\theta_j, &\mbox{otherwise}.
\end{array}
\right.
\] 
\item the \textit{uniform} covariation scheme, $\sigma_{\textnormal{uni}}(\theta_j,\tilde{\theta}_i)$, for $r=\#\Theta_C$, as
\[
\sigma_{\textnormal{uni}}(\theta_j,\tilde{\theta}_i)=\left\{
\begin{array}{ll}
\tilde{\theta}_i,& \mbox{if } j=i,\\
\frac{1-\tilde{\theta}_i}{r-1}, &\mbox{otherwise}.
\end{array}
\right.
\] 
\item the \textit{order preserving} covariation scheme, $\sigma_{\textnormal{ord}}(\theta_j,\tilde{\theta}_i)$, for $i\neq r$, as
\[
\sigma_{\textnormal{ord}}(\theta_j,\tilde{\theta}_i)=\left\{
\begin{array}{ll}
\tilde{\theta}_i,& \mbox{if } j=i,\\
\frac{\theta_j}{\theta_i}\tilde{\theta}_i, &\mbox{if } j<i \mbox{ and } \tilde{\theta}_i\leq \theta_i,\\
\frac{-\theta_j(1-\theta_{\textnormal{suc}})}{\theta_{\textnormal{suc}}\theta_i}\tilde{\theta}_i+\frac{\theta_j}{\theta_{\textnormal{suc}}}, &\mbox{if } j>i \mbox{ and } \tilde{\theta}_i\leq \theta_i,\\
\frac{\theta_j}{\theta_{\textnormal{max}}-\theta_i}(\theta_{\textnormal{max}} -\tilde{\theta}_i), &\mbox{if } j<i \mbox{ and } \tilde{\theta}_i> \theta_i,\\
\frac{\theta_j-\theta_{\textnormal{max}}}{\theta_{\textnormal{max}}-\theta_i}(\theta_{\textnormal{max}} -\tilde{\theta}_i)+\theta_{\textnormal{max}},&\mbox{if } j>i \mbox{ and } \tilde{\theta}_i> \theta_i,
\end{array}
\right.
\]
where $\theta_{\textnormal{max}}=1/(1+r-i)$ is the upper bound for $\tilde{\theta}_i$ and $\theta_{\textnormal{suc}}=\sum_{k=i+1}^r\theta_k$ is the original mass of the parameters succeeding $\theta_i$ in the ordering.
\end{itemize}
\end{definition}
Table \ref{table:prop} summarizes which of the properties introduced in Definition \ref{def:prop} the above schemes entertain (see \citep{Renooij2014} for more details). Under proportional covariation, to all the covarying parameters is assigned the same proportion of the remaining probability mass as these originally had. Although this scheme is not order preserving, it maintains the order among the covarying parameters. The uniform scheme on the other hand gives the same amount of the remaining mass to all covarying parameters. In addition, although the order preserving scheme is the only one that entertains the order preserving property, this limits the possible variations allowed. Note that this scheme is not only simply linear, but more precisely piece-wise linear, i.e. a function composed of straight-line sections. All the schemes in Definition \ref{def:schemes} are domain independent and therefore can be applied with no prior knowledge about the application of interest. Other schemes, for instance domain dependent or non-linear, have been defined, but these are not of interest for the theory we develop here.
 
\begin{table}
\begin{center}
\begin{tabular}{|l|c|c|c|c|c|}
\hline
\textbf{Scheme}/\textbf{Property}&valid&imp-pres&ord-pres&ident-pres&linear\\
\hline
Proportional & \cmark &\cmark &\xmark &\cmark &\cmark\\
\hline
Uniform & \cmark &\xmark &\xmark &\xmark &\cmark\\
\hline
Order Preserving & \cmark &\cmark &\cmark &\cmark &\cmark\\
\hline
\end{tabular}
\end{center}
\caption{Summary of the covariation schemes and the properties these entertain. \label{table:prop}}
\end{table}

\subsection{Sensitivity functions}
\label{sec:sensi}
We now generalize one-way sensitivity methods in BNs to the single full CPT case for general multilinear models. This type of analysis is simpler than other multi-way methods since the parameters varied/covaried never appear in the same monomial of the BN interpolating polynomial. So we now find an analogous CPT analysis in multilinear models which has the same property. Suppose we vary $n$ parameters $\theta_{1_i},\dots,\theta_{n_i}$ and denote by $\Theta_{j}=\{\theta_{j_1},\dots,\theta_{j_{r_j}}\}$, $j\in[n]$, the set of parameters including $\theta_{j_i}$ and associated to the same (conditional) random variable: thus respecting the sum to one condition. Assume these sets are such that $\cap_{j\in[n]}\Theta_{j}=\emptyset$. Note that a collection of such sets can not only be associated to the CPTs of one vertex given different parent configurations, but also, for instance, to the leaves of a CSI-tree as in Figure \ref{fig:CSI} or to the positions along the same \emph{cut} in a CEG \cite{Smith2008}.

We start by investigating sensitivity functions. These describe the effect of the variation of the parameters  $\theta_{1_i},\dots,\theta_{n_i}$ on the probability of an event $\mathbb{Y}_T\subseteq \mathbb{Y}$ of interest.  A sensitivity function $f_{\bm{y}_T}(\tilde{\theta}_{1_i},\dots,\tilde{\theta}_{n_i})$  equals the probability $\Pr(\bm{Y}\in\mathbb{Y}_T)\triangleq p_{\tilde{\theta}}(\bm{y}_T)$ and is a function in $\tilde{\theta}_{1_i},\dots, \tilde{\theta}_{n_i}$, where $\theta_{1_i},\dots, \theta_{n_i}$ are varied to  $\tilde{\theta}_{1_i},\dots, \tilde{\theta}_{n_i}$. Our parametric definition of a statistical model enables us to explicitly express these as functions of the covariation scheme for any multilinear model. Recall that $\mathbb{A}=\{\bm{\alpha}_{\bm{y}}\,|\, \bm{y}\in\mathbb{Y}\}$ and let $\T=\{\bm{\alpha}_{\bm{y}}\,|\,\bm{y}\in\mathbb{Y}_T\}$. Let $\mathbb{A}_{j},\bm{\mathbb{T}}_{j}\subset \{0,1\}^k$ be the subsets of $\mathbb{A}$ and $\bm{\mathbb{T}}$ respectively including the exponents where the entry associated to an indeterminate in $\Theta_{j}$ is not zero, $\mathbb{A}_{j_s}\subseteq\mathbb{A}_{j}$ and $\mathbb{T}_{j_s}\subseteq \mathbb{T}_j$  be the subsets including the exponents such that the entry relative to $\theta_{j_s}$ is not zero, $j\in[n], s\in[r_j]$. Formally
\[
\begin{array}{lcr}
\mathbb{A}_j=\{\bm{\alpha}_{\bm{y}}\,|\,\bm{y}\in\mathbb{Y},\alpha_{j_s,\bm{y}}\neq 0, s\in[r_j]\},&&\mathbb{T}_j=\{\bm{\alpha}_{\bm{y}}\,|\,\bm{y}\in\mathbb{Y}_{T},\alpha_{j_s,\bm{y}}\neq 0, s\in[r_j]\},\\
\mathbb{A}_{j_s}=\{\bm{\alpha}_{\bm{y}}\,|\,\bm{y}\in\mathbb{Y},\alpha_{j_s,\bm{y}}\neq 0\},&&\mathbb{T}_{j_s}=\{\bm{\alpha}_{\bm{y}}\,|\,\bm{y}\in\mathbb{Y}_{T},\alpha_{j_s,\bm{y}}\neq 0\}.
\end{array}
\]
Let $\mathbb{A}_{-j_s},\bm{\mathbb{T}}_{-j_s}\subseteq\{0,1\}^{k-1}$ be the sets including the elements in $\mathbb{A}_{j_s}$ and $\bm{\mathbb{T}}_{j_s}$, respectively, where the entry relative to $\theta_{j_s}\in\Theta_{j}$ is deleted. Lastly, let $\bm{\theta}_{-j_s}=\prod_{\theta_k\in\Theta\setminus\{\theta_{j_s}\}}\theta_k$.

\begin{proposition}
Consider a multilinear  model $\mathbb{P}_{\Psi}$ where the parameters $\theta_{j_i}\in\Theta_j$ are varied to $\tilde{\theta}_{j_i}$ and $\theta_{j_s}\in\Theta_j\setminus\{\theta_{j_i}\}$ is covaried according to a valid scheme $\sigma_j(\theta_{j_s},\tilde{\theta}_{j_i})$, $j\in[n]$, $s\in[r_j]\setminus\{j_i\}$. The sensitivity function $f_{\bm{y}_T}(\tilde{\theta}_{1_i},\dots,\tilde{\theta}_{n_i})$  can then be written as
\begin{equation}
f_{\bm{y_T}}(\tilde{\theta}_{1_i},\dots,\tilde{\theta}_{n_i})=\sum_{j\in[n]}\sum_{\bm{\alpha}\in\mathbb{T}_{-j_i}}\bm{\theta}_{-j_i}^{\bm{\alpha}}\tilde{\theta}_{j_i}+\sum_{j\in[n]}\sum_{s\in[r_j]\setminus\{j_i\}}\sum_{\bm{\alpha}\in\mathbb{T}_{-j_s}}\bm{\theta}_{-j_s}^{\bm{\alpha}}\sigma_j(\theta_{j_s},\tilde{\theta}_{j_i})+\hspace{-0.5cm}\sum_{\bm{\alpha}\in\mathbb{T}\setminus\cup_{k\in[n]}\mathbb{T}_k}\bm{\theta}^{\bm{\alpha}}.
\label{eq:sens}
\end{equation}
\label{prop:multi}
\end{proposition}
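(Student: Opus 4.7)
The plan is to expand the sensitivity function using the monomial parametrisation as $f_{\bm{y}_T}(\tilde{\theta}_{1_i},\dots,\tilde{\theta}_{n_i}) = p_{\tilde{\theta}}(\bm{y}_T) = \sum_{\bm{\alpha}\in\mathbb{T}} \tilde{\bm{\theta}}^{\bm{\alpha}}$, then partition the index set $\mathbb{T}$ according to which varied or covaried parameter (if any) each monomial contains, and finally use multilinearity to separate the single varied indeterminate from the remaining factors of each monomial.

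First I would argue that the exponent sets $\mathbb{T}_1,\ldots,\mathbb{T}_n$ are pairwise disjoint. This follows from the defining structure of a single full CPT analysis: parameters in different $\Theta_j$ correspond to mutually incompatible contexts (for instance, distinct parent-configurations in a BN, distinct leaves of a CSI-tree, or distinct positions along a cut in a chain event graph), so no atom $\bm{y}\in\mathbb{Y}_T$ can be simultaneously compatible with parameters drawn from two different $\Theta_j$. Within a fixed $\Theta_j$, the finer subsets $\mathbb{T}_{j_1},\ldots,\mathbb{T}_{j_{r_j}}$ are also pairwise disjoint since $\theta_{j_1},\ldots,\theta_{j_{r_j}}$ index mutually exclusive values of the same conditional random variable. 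This yields the disjoint decomposition
\[
\mathbb{T} \;=\; \bigsqcup_{j\in[n]}\bigsqcup_{s\in[r_j]} \mathbb{T}_{j_s} \;\sqcup\; \Bigl(\mathbb{T}\setminus\textstyle\bigcup_{k\in[n]}\mathbb{T}_k\Bigr).
\]

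Second, I would invoke multilinearity. Because $\mathbb{A}\subseteq\{0,1\}^k$, the indeterminate $\theta_{j_s}$ appears with exponent exactly $1$ in every monomial indexed by $\mathbb{T}_{j_s}$, so each such monomial factors as $\bm{\theta}^{\bm{\alpha}} = \theta_{j_s}\,\bm{\theta}_{-j_s}^{\bm{\alpha}'}$ where $\bm{\alpha}'\in\mathbb{T}_{-j_s}$ is obtained by deleting the coordinate indexing $\theta_{j_s}$. Applying the variation $\theta_{j_i}\mapsto\tilde{\theta}_{j_i}$ on the blocks $\mathbb{T}_{j_i}$, the covariation $\theta_{j_s}\mapsto\sigma_j(\theta_{j_s},\tilde{\theta}_{j_i})$ on the blocks $\mathbb{T}_{j_s}$ for $s\neq i$, and leaving the monomials in $\mathbb{T}\setminus\bigcup_k\mathbb{T}_k$ untouched recovers directly the three sums on the right-hand side of equation (\ref{eq:sens}).

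The main obstacle is the tacit combinatorial hypothesis underlying the formula: it assumes no monomial contains more than one of the varied parameters $\theta_{1_i},\ldots,\theta_{n_i}$, for otherwise the first sum would double-count such a monomial and produce spurious linear terms where a genuine product $\tilde{\theta}_{j_i}\tilde{\theta}_{k_i}$ is required. Making this hypothesis explicit via the pairwise disjointness of the $\mathbb{T}_j$, and checking that it is indeed the defining feature of a single full CPT analysis rather than of a general multi-way one, is the only conceptual subtlety; once it is in place, everything else is a direct consequence of the multilinearity of the interpolating polynomial together with the monomial parametrisation.
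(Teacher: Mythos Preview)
Your proposal is correct and follows essentially the same route as the paper: write $p_{\theta}(\bm{y}_T)=\sum_{\bm{\alpha}\in\mathbb{T}}\bm{\theta}^{\bm{\alpha}}$, partition $\mathbb{T}$ into the blocks $\mathbb{T}_{j_s}$ and the residual set, use multilinearity to factor out the single indeterminate $\theta_{j_s}$ from each block, and then substitute the varied values. You are in fact more explicit than the paper about the disjointness of the $\mathbb{T}_j$, which the paper relegates to the prose preceding the proposition (``the parameters varied/covaried never appear in the same monomial'') rather than to the proof itself.
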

\begin{proof}
The probability of interest can be written as
\begin{align*}
p_\theta(\bm{y}_T)&=\sum_{\bm{\alpha}\in\mathbb{T}}\theta^{\bm{\alpha}}=\sum_{j\in[n]}\sum_{s\in[r_j]}\sum_{\bm{\alpha}\in\mathbb{T}_{-j_s}}\bm{\theta}_{-j_s}^{\bm{\alpha}}\theta_{j_s}+\sum_{\bm{\alpha}\in\mathbb{T}\setminus\cup_{k\in[n]}\mathbb{T}_k}\bm{\theta}^{\bm{\alpha}}\\
&=\sum_{j\in[n]}\sum_{\bm{\alpha}\in\mathbb{T}_{-j_i}}\bm{\theta}_{-j_i}^{\bm{\alpha}}\theta_{j_i}+\sum_{j\in[n]}\sum_{s\in[r_j]\setminus\{j_i\}}\sum_{\bm{\alpha}\in\mathbb{T}_{-j_s}}\bm{\theta}_{-j_s}^{\bm{\alpha}}\theta_{j_s}+\sum_{\bm{\alpha}\in\mathbb{T}\setminus\cup_{k\in[n]}\mathbb{T}_k}\bm{\theta}^{\bm{\alpha}}.
\end{align*}
The result follows by substituting the varying parameters with their varied version.
\end{proof}

From Proposition \ref{prop:multi} we can deduce that for a multilinear model, under a linear covariation scheme, the sensitivity function is multilinear.
\begin{corollary}
Under the conditions of Proposition \ref{prop:multi} and the linear covariation schemes $\sigma_j(\theta_{j_s},\tilde{\theta}_{j_i})=\gamma_{j_s}\tilde{\theta}_{j_i}+\delta_{j_s}$, the sensitivity function $f_{\bm{y}_T}(\tilde{\theta}_{1_i},\dots,\tilde{\theta}_{n_i})$  equals
\begin{equation}
\label{eq:linearsens}
f_{\bm{y}_T}(\tilde{\theta}_{1_i},\dots,\tilde{\theta}_{n_i})=\sum_{j\in[n]}a_j\tilde{\theta}_{j_i}+b,
\end{equation}
where
\begin{equation}
\label{now}
a_j=\sum_{\bm{\alpha}\in\T_{-j_i}}\bm{\theta}_{-j_i}^{\bm{\alpha}}+\sum_{s\in[r_j]\setminus \{j_i\}}\sum_{\bm{\alpha}\in\T_{-j_s}}\bm{\theta}_{-j_s}^{\bm{\alpha}}\gamma_{j_s},\hspace{0.5cm}
b=\sum_{j\in[n]}\sum_{s\in[r_j]\setminus\{j_i\}}\sum_{\bm{\alpha}\in \mathbb{T}_{-j_s}}\bm{\theta}_{-j_s}^{\bm{\alpha}}\delta_{j_s}+\hspace{-0.5cm}\sum_{\bm{\alpha}\in\mathbb{T}\setminus\cup_{k\in[n]}\mathbb{T}_k}\bm{\theta}^{\bm{\alpha}}.
\end{equation}
\label{cor1}
\end{corollary}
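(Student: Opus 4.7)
The plan is to prove this corollary by direct algebraic substitution of the linear covariation form into equation (\ref{eq:sens}) of Proposition \ref{prop:multi}, followed by collecting terms by powers of the varied parameters. First I would replace each instance of $\sigma_j(\theta_{j_s},\tilde{\theta}_{j_i})$ appearing in the middle summand of (\ref{eq:sens}) by $\gamma_{j_s}\tilde{\theta}_{j_i}+\delta_{j_s}$, and then distribute the surrounding sum $\sum_{\bm{\alpha}\in\mathbb{T}_{-j_s}}\bm{\theta}_{-j_s}^{\bm{\alpha}}(\,\cdot\,)$ across the $\gamma_{j_s}\tilde{\theta}_{j_i}$ and $\delta_{j_s}$ pieces.

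Next I would split the resulting expression into the part proportional to $\tilde{\theta}_{j_i}$ for each fixed $j$, and a constant part. For each $j$, the coefficient of $\tilde{\theta}_{j_i}$ collects the first summand of (\ref{eq:sens}), contributing $\sum_{\bm{\alpha}\in\T_{-j_i}}\bm{\theta}_{-j_i}^{\bm{\alpha}}$, together with the $\gamma_{j_s}\tilde{\theta}_{j_i}$ pieces from the substituted covariation, contributing $\sum_{s\in[r_j]\setminus\{j_i\}}\sum_{\bm{\alpha}\in\T_{-j_s}}\bm{\theta}_{-j_s}^{\bm{\alpha}}\gamma_{j_s}$; this is precisely $a_j$ as in (\ref{now}). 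The remaining $\delta_{j_s}$-contributions, together with the third summand $\sum_{\bm{\alpha}\in\mathbb{T}\setminus\cup_{k\in[n]}\mathbb{T}_k}\bm{\theta}^{\bm{\alpha}}$ of (\ref{eq:sens}) which is already parameter-free, assemble into $b$.

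The key conceptual point — that the result is genuinely affine in the vector $(\tilde{\theta}_{1_i},\ldots,\tilde{\theta}_{n_i})$ with no cross terms $\tilde{\theta}_{j_i}\tilde{\theta}_{k_i}$ for $j\neq k$ — is already encoded in Proposition \ref{prop:multi}, whose decomposition exploits multilinearity of the interpolating polynomial together with the structural consequence of $\cap_{j\in[n]}\Theta_j=\emptyset$ that no monomial of $\mathbb{T}$ carries parameters from more than one $\Theta_j$. Since this heavy lifting is imported from Proposition \ref{prop:multi}, the corollary reduces to routine rearrangement, and I do not expect any serious obstacle: the only care needed is in checking the index bookkeeping, in particular keeping the $s=j_i$ term separate so that the $\tilde{\theta}_{j_i}$ coefficient is correctly assembled from both sources described above.
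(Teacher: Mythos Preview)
Your proposal is correct and matches the paper's own proof, which simply states that the result follows by substituting the definition of a linear covariation scheme into equation (\ref{eq:sens}) and then rearranging. Your write-up actually supplies more detail on the term-by-term bookkeeping than the paper does, but the underlying argument is identical.
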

\begin{proof}
The result follows by substituting the definition of a linear covariation scheme into equation (\ref{eq:sens}) and then rearranging. 
\end{proof}

Therefore, under a linear covariation scheme, the sensitivity function is a multilinear function of the varying parameters $\tilde{\theta}_{j_i}$, $j\in[n]$. This was long known for BN models \cite{Castillo1997,Renooij2014,Gaag2007}. However, we have proven here that this feature is shared amongst all models having a multilinear interpolating polynomial. In BNs the computation of the coefficients $a_j$ and $b$ is particularly fast since for these models computationally efficient propagation techniques have been established. But these exist, albeit sometimes less efficiently, for other models as well (see e.g. \citep{Cowell2007} for chain graphs). Within our symbolic definition, we note however that once the exponent sets $\T_{-j_s}$, $s\in[r_j]$, are identified, then one can simply plug-in the values of the indeterminates to compute these coefficients. 

We now deduce the sensitivity function when parameters are varied using the popular proportional scheme.
\begin{corollary}
Under the conditions of Proposition \ref{prop:multi} and proportional covariation scheme $\sigma_{j}(\theta_{j_s},\tilde{\theta}_{j_i})=\frac{1-\tilde{\theta}_{j_i}}{1-\theta_{j_i}}\theta_{j_s}$, the sensitivity function, $f_{\bm{y}_T}(\tilde{\theta}_{1_i},\dots,\tilde{\theta}_{n_i})$  can be written in the multilinear form of equation (\ref{eq:linearsens}), where
\[
a_j=\sum_{\bm{\alpha}\in\T_{-j_i}}\bm{\theta}_{-j_i}^{\bm{\alpha}}-\sum_{s\in[r_j]\setminus \{j_i\}}\sum_{\bm{\alpha}\in\T_{j_s}}\frac{\bm{\theta}^{\bm{\alpha}}}{1-\theta_{j_i}},\hspace{0.5cm} b=\sum_{j\in[n]}\sum_{s\in[r_j]\setminus\{j_i\}}\sum_{\bm{\alpha}\in \mathbb{T}_{j_s}}\frac{\bm{\theta}^{\bm{\alpha}}}{1-\theta_{j_i}}+\hspace{-0.2cm}\sum_{\bm{\alpha}\in\mathbb{T}\setminus\cup_{k\in[n]}\mathbb{T}_k}\bm{\theta}^{\bm{\alpha}}.
\]
\end{corollary}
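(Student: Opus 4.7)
The plan is to recognise that proportional covariation is itself a linear covariation scheme in the sense of Definition~\ref{def:prop}, so that the result can be obtained by invoking Corollary~\ref{cor1} with specific values of $\gamma_{j_s}$ and $\delta_{j_s}$, followed by a routine rewriting of the resulting sums.

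Concretely, I would rewrite the proportional scheme as
\[
\sigma_{j}(\theta_{j_s},\tilde{\theta}_{j_i}) \;=\; \frac{1-\tilde{\theta}_{j_i}}{1-\theta_{j_i}}\,\theta_{j_s} \;=\; -\frac{\theta_{j_s}}{1-\theta_{j_i}}\,\tilde{\theta}_{j_i} \;+\; \frac{\theta_{j_s}}{1-\theta_{j_i}},
\]
which identifies $\gamma_{j_s}=-\theta_{j_s}/(1-\theta_{j_i})$ and $\delta_{j_s}=\theta_{j_s}/(1-\theta_{j_i})$. Substituting these values into the expressions for $a_j$ and $b$ given in~(\ref{now}) then places $f_{\bm{y}_T}$ in the multilinear form~(\ref{eq:linearsens}) for free.

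The second step is to rewrite the resulting double sums, which currently range over the \emph{reduced} exponent sets $\T_{-j_s}$, as sums over the \emph{full} exponent sets $\T_{j_s}$. Here I invoke Definition~\ref{def:multi}: since $\mathbb{A}\subseteq\{0,1\}^k$, every $\bm{\alpha}\in\T_{-j_s}$ corresponds bijectively to a unique $\bm{\beta}\in\T_{j_s}$ obtained by inserting a $1$ in the $j_s$-th coordinate, and along this bijection $\bm{\theta}_{-j_s}^{\bm{\alpha}}\cdot\theta_{j_s} = \bm{\theta}^{\bm{\beta}}$ with no residual power of $\theta_{j_s}$. Applying this identification to both the $\gamma_{j_s}$-contribution in $a_j$ and the $\delta_{j_s}$-contribution in $b$, and pulling the common scalar $1/(1-\theta_{j_i})$ outside the inner sums, yields the announced formulas.

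No serious obstacle is anticipated: the argument is a direct substitution into Corollary~\ref{cor1} followed by a re-indexing of the summation. The only delicate point worth flagging is that the simplification $\bm{\theta}_{-j_s}^{\bm{\alpha}}\theta_{j_s}=\bm{\theta}^{\bm{\beta}}$ genuinely requires multilinearity rather than merely a monomial parametrisation; for a model in which some $\theta_{j_s}$ could occur with exponent exceeding $1$, an extra factor of $\theta_{j_s}^{\beta_{j_s}-1}$ would survive and the clean form of $a_j$ and $b$ claimed in the corollary would no longer hold.
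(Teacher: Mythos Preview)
Your proposal is correct and follows essentially the same route as the paper: identify the proportional scheme as linear with $\gamma_{j_s}=-\theta_{j_s}/(1-\theta_{j_i})$ and $\delta_{j_s}=\theta_{j_s}/(1-\theta_{j_i})$, substitute into the formulas of Corollary~\ref{cor1}, and then use the identity $\sum_{\bm{\alpha}\in\T_{-j_s}}\bm{\theta}_{-j_s}^{\bm{\alpha}}\theta_{j_s}=\sum_{\bm{\alpha}\in\T_{j_s}}\bm{\theta}^{\bm{\alpha}}$ to pass from reduced to full exponent sets. Your explicit remark that this last identity genuinely requires multilinearity is a nice addition that the paper leaves implicit.
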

\begin{proof}
For a proportional scheme the coefficients in the definition of a linear scheme equals $\gamma_{j_s}=-\theta_{j_s}/(1-\theta_{j_i})$ and $\delta_{j_s}=\theta_{j_s}/(1-\theta_{j_i})$. By substituting these expressions into equation (\ref{now}) we have that 
\[
a_j=\sum_{\bm{\alpha}\in\T_{-j_i}}\bm{\theta}_{-j_i}^{\bm{\alpha}}-\sum_{s\in[r_j]\setminus \{j_i\}}\sum_{\bm{\alpha}\in\T_{-j_s}}\bm{\theta}_{-j_s}^{\bm{\alpha}}\frac{\theta_{j_s}}{1-\theta_{j_i}},\hspace{0.35cm}
b=\sum_{\substack{j\in[n] \\ s\in[r_j]\setminus\{j_i\}}}\sum_{\bm{\alpha}\in \mathbb{T}_{-j_s}}\bm{\theta}_{-j_s}^{\bm{\alpha}}\frac{\theta_{j_s}}{1-\theta_{j_i}}+\hspace{-0.2cm}\sum_{\bm{\alpha}\in\mathbb{T}\setminus\cup_{k\in[n]}\mathbb{T}_k}\bm{\theta}^{\bm{\alpha}}.
\]
By noting that $\sum_{\bm{\alpha}\in \mathbb{T}_{-j_s}}\bm{\theta}_{-j_s}^{\bm{\alpha}}\theta_{j_s}=\sum_{\bm{\alpha}\in \mathbb{T}_{j_s}}\bm{\theta}^{\bm{\alpha}}$ the result then follows.
\end{proof}

It is often of interest to investigate the posterior probability of a target event $(\bm{Y}\in\mathbb{Y}_T)$ given that an event $(\bm{Y}\in\mathbb{Y}_O)$ has been observed, $\mathbb{Y}_T,\mathbb{Y}_O\subseteq\mathbb{Y}$. This can be represented by the \textit{posterior} sensitivity function $f_{\bm{y}_T}^{\bm{y}
_O}(\tilde{\theta}_{1_i},\dots,\tilde{\theta}_{n_i})$ describing the probability $\Pr(\bm{Y}\in\mathbb{Y}_T\,|\,\bm{Y}\in\mathbb{Y}_O)$ as a function of the varying parameters $\tilde{\theta}_{1_i},\dots,\tilde{\theta}_{n_i}$.  
\begin{corollary}
\label{cor:senspost}
Under the conditions of Corollary \ref{cor1}, a posterior sensitivity function $f_{\bm{y}_T}^{\bm{y}_O}(\tilde{\theta}_{1_i},\dots,\tilde{\theta}_{n_i})$ can be written as the ratio
\begin{equation}
\label{eq:senspost}
f_{\bm{y}_T}^{\bm{y}_O}(\tilde{\theta}_{1_i},\dots,\tilde{\theta}_{n_i})=\frac{\sum_{j\in [n]}a_j \tilde{\theta}_{j_i}+b}{\sum_{j\in [n]}c_j\tilde{\theta}_{j_i}+d},
\end{equation}
where $a_j,c_j\in[0,1]$, $j\in[n]$,  and $b,d\in(-1,1)$.
\end{corollary}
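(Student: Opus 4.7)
The plan is to express the posterior probability as a ratio of two joint probabilities and then apply Corollary \ref{cor1} to each piece. Specifically, by the definition of conditional probability,
\[
f_{\bm{y}_T}^{\bm{y}_O}(\tilde{\theta}_{1_i},\dots,\tilde{\theta}_{n_i})=\Pr(\bm{Y}\in\mathbb{Y}_T\mid\bm{Y}\in\mathbb{Y}_O)=\frac{\Pr(\bm{Y}\in\mathbb{Y}_T\cap\mathbb{Y}_O)}{\Pr(\bm{Y}\in\mathbb{Y}_O)}.
\]
Both numerator and denominator are themselves instances of the sensitivity function from Proposition \ref{prop:multi}, applied to the events $\mathbb{Y}_T\cap\mathbb{Y}_O$ and $\mathbb{Y}_O$ respectively. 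The varied and covaried parameter sets $\Theta_j$ are the same for both, so the exponent sets $\mathbb{T}_{-j_i}$, $\mathbb{T}_{-j_s}$ and $\mathbb{T}\setminus\bigcup_k\mathbb{T}_k$ are simply recomputed with $\mathbb{Y}_T$ replaced by $\mathbb{Y}_T\cap\mathbb{Y}_O$ in the numerator and by $\mathbb{Y}_O$ in the denominator.

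Under a linear covariation scheme (the standing assumption inherited from Corollary \ref{cor1}), Corollary \ref{cor1} then applies directly to each of these two sensitivity functions, giving two multilinear expressions of the form $\sum_{j\in[n]} a_j \tilde\theta_{j_i}+b$ and $\sum_{j\in[n]} c_j \tilde\theta_{j_i}+d$, which, taken as a ratio, yields expression (\ref{eq:senspost}).

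It remains to justify the stated ranges of the coefficients. Observe that, with all other varying parameters fixed at admissible values, each $a_j$ (and each $c_j$) is the partial derivative of a probability mass function with respect to $\tilde\theta_{j_i}$; since that probability lies in $[0,1]$ as $\tilde\theta_{j_i}$ ranges over $[0,1]$, we obtain $a_j,c_j\in[0,1]$. Similarly, $b$ and $d$ are evaluations of the original probabilities with $\tilde\theta_{j_i}=0$, possibly adjusted by the linear covariation offsets $\delta_{j_s}$; since in all practical linear schemes (proportional, uniform, order preserving) the offset contribution and the remaining atomic probabilities are bounded by $1$ in absolute value, we get $b,d\in(-1,1)$.

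The only real delicacy is the coefficient bounds: under a generic linear scheme the coefficients $\gamma_{j_s},\delta_{j_s}$ are only constrained to lie in $[0,1]\times(-1,1)$, so one must check that the combinations appearing in $a_j$ and $b$ (and hence in $c_j$ and $d$) remain within the claimed intervals. This follows because valid covariation forces the covaried parameters $\sigma_j(\theta_{j_s},\tilde\theta_{j_i})$ to lie in $[0,1]$ for every admissible $\tilde\theta_{j_i}\in[0,1]$, which in turn constrains the possible combinations of $(\gamma_{j_s},\delta_{j_s})$. Substituting those bounded covaried values into the multilinear expression guarantees that each $a_j$ and $c_j$ is the change in a probability across the unit interval (hence in $[-1,1]$, and in $[0,1]$ for the natural probability-preserving schemes considered), while $b$ and $d$ are themselves values of a probability mass function and hence in $[0,1]\subset(-1,1)$.
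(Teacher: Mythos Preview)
Your core argument---write the conditional probability as a ratio and apply Corollary~\ref{cor1} separately to numerator and denominator---is exactly the paper's proof, which consists of a single sentence invoking equation~(\ref{eq:linearsens}) and the definition of conditional probability.

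Where you go further is in trying to justify the coefficient ranges $a_j,c_j\in[0,1]$ and $b,d\in(-1,1)$. The paper does not argue these bounds at all; it simply asserts them. Your attempt is well-motivated but ends up muddled: the partial-derivative observation only yields $a_j\in[-1,1]$, not $[0,1]$, as you yourself concede in the final paragraph. Indeed, for the proportional scheme the paper itself computes $\gamma_{j_s}=-\theta_{j_s}/(1-\theta_{j_i})<0$, so the second sum in the explicit formula for $a_j$ in Corollary~\ref{cor1} is negative, and $a_j$ can certainly be negative (take an event $\mathbb{Y}_T$ disjoint from the atoms carrying $\theta_{j_i}$). So the stated range $a_j\in[0,1]$ is not correct in general, and your hedging reflects a genuine issue with the statement rather than with your reasoning. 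For the purposes of matching the paper's proof, your first two paragraphs already suffice; the bound discussion is extra and, as written, does not establish what is claimed.
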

\begin{proof}
The result follows from equation (\ref{eq:linearsens}) and by noting that $\Pr(\bm{Y}\in\mathbb{Y}_T\,|\,\bm{Y}\in\mathbb{Y}_O)=\Pr(\bm{Y}\in\left\{\mathbb{Y}_T\cap\mathbb{Y}_O\right\})/\Pr(\bm{Y}\in\mathbb{Y}_O)$. 
\end{proof}
The form of the coefficients in Corollary \ref{cor:senspost} can be deduced by simply adapting the notation of equation (\ref{eq:sens}) to the events $\Pr(\bm{Y}\in\left\{\mathbb{Y}_T\cap\mathbb{Y}_O\right\})$ and $\Pr(\bm{Y}\in\mathbb{Y}_O)$ for the numerator and the denominator, respectively, of equation (\ref{eq:senspost}). Sensitivity functions describing posterior probabilities in BNs have been proven to entertain the form in equation (\ref{eq:senspost}). Again, Corollary \ref{cor:senspost} shows that this is so for any model having a multilinear interpolating polynomial. 

\begin{table}
\begin{center}
\begin{tabular}{|cccc|}
\hline
$\theta_{1_1}=0.4$,& $\theta_{2_21_1}=0.3$, & $\theta_{2_11_1}=0.5$,& $\theta_{21_0}=0.3$,\\
 $\theta_{3_02_21}=0.1$, & $\theta_{3_02_11}=0.3$, & $\theta_{3_02_01_1}=0.7$, & $\theta_{3_02_01_0}=0.8$.\\
\hline
\end{tabular}
\end{center}
\caption{Probability specifications for Example \ref{ex:CSBN}. \label{table:prob}}
\end{table}

\begin{example}
\label{ex:sensi}
Suppose the context-specific model definition in Example \ref{ex:CSBN} is completed by the probability specifications in Table \ref{table:prob}. Suppose we are interested in the event that parents do not decide for vaccination. Figure \ref{fig:sensi} shows the sensitivity functions for this event when $\theta_{2_11_1}$ (on the x-axis) and $\theta_{21_0}$ (on the y-axis) are varied and the other covarying parameter are changed with different schemes. We can notice that for all schemes the functions are linear in their arguments and that more precisely for an order-preserving scheme the sensitivity function is piece-wise linear. Notice that whilst uniform and proportional covariation assigns similar, although different, probabilities to the event of interest, under order-preserving covariation  the probability of interest changes very differently from the other schemes: a property we have often observed in our investigations.
\begin{figure}
    \centering
    \begin{subfigure}[b]{0.32\textwidth}
        \includegraphics[width=\textwidth]{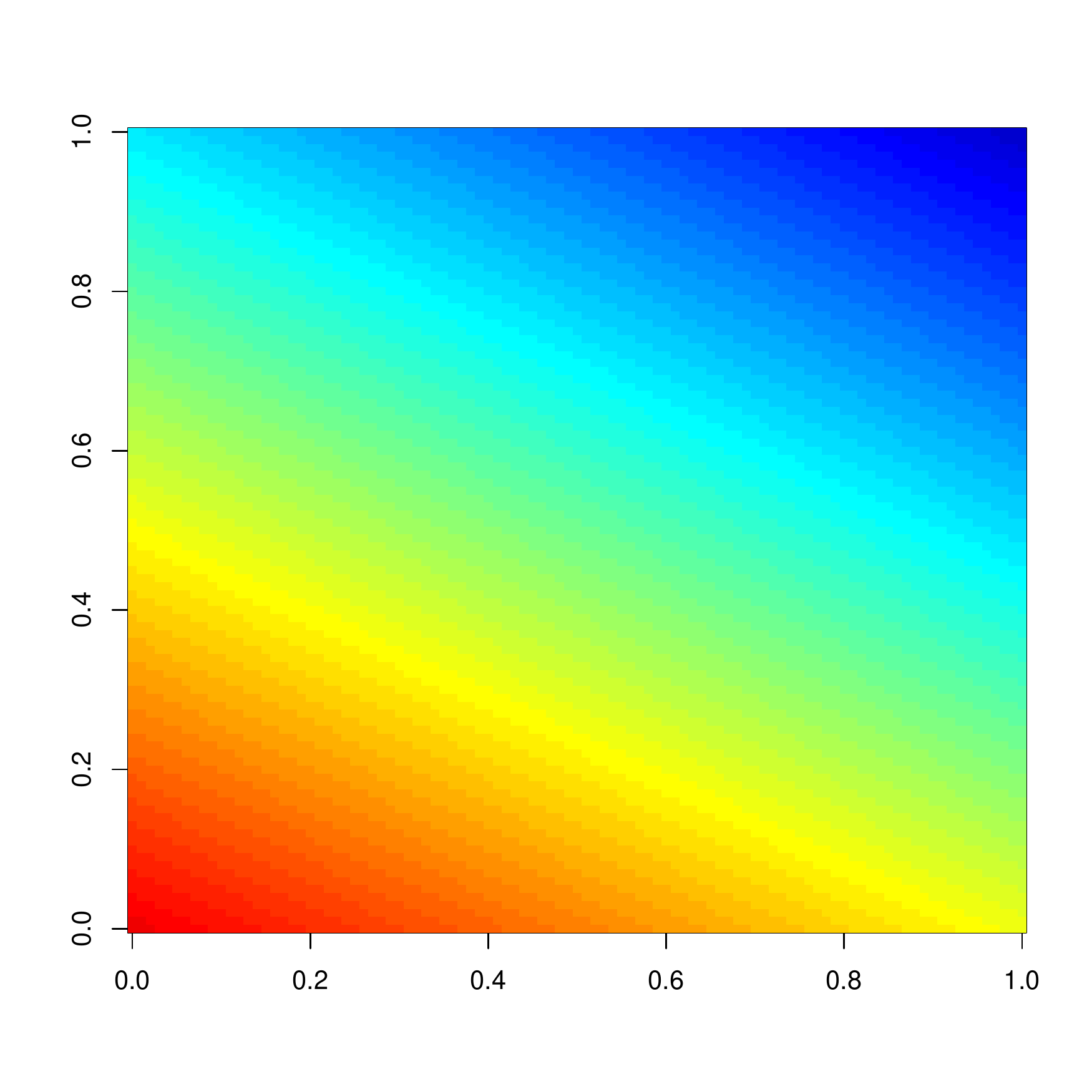}
        \caption{\tiny{Proportional.}}
    \end{subfigure}
    ~ 
    \begin{subfigure}[b]{0.32\textwidth}
        \includegraphics[width=\textwidth]{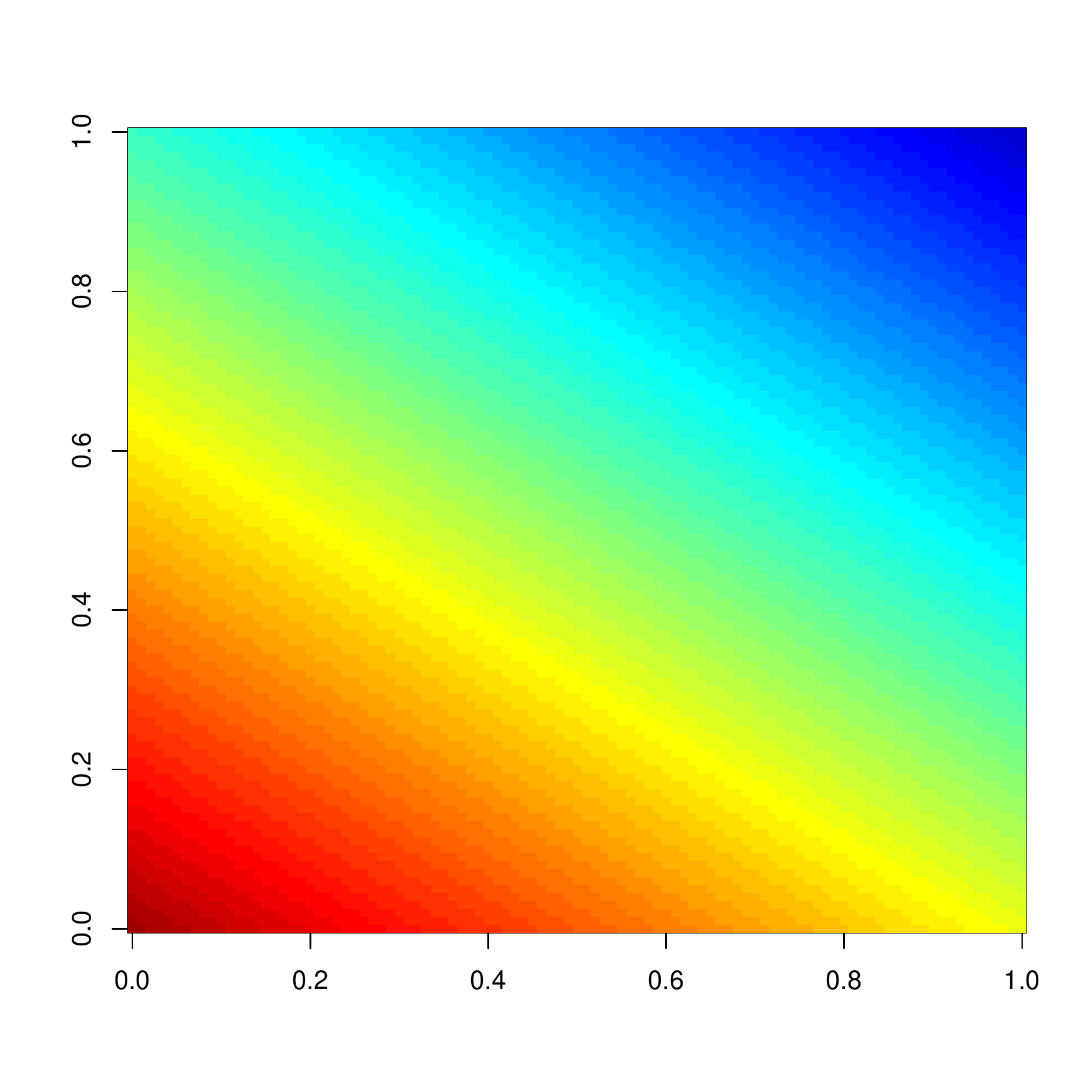}
        \caption{\tiny{Uniform.}}
    \end{subfigure}
     ~ 
    \begin{subfigure}[b]{0.32\textwidth}
        \includegraphics[width=\textwidth]{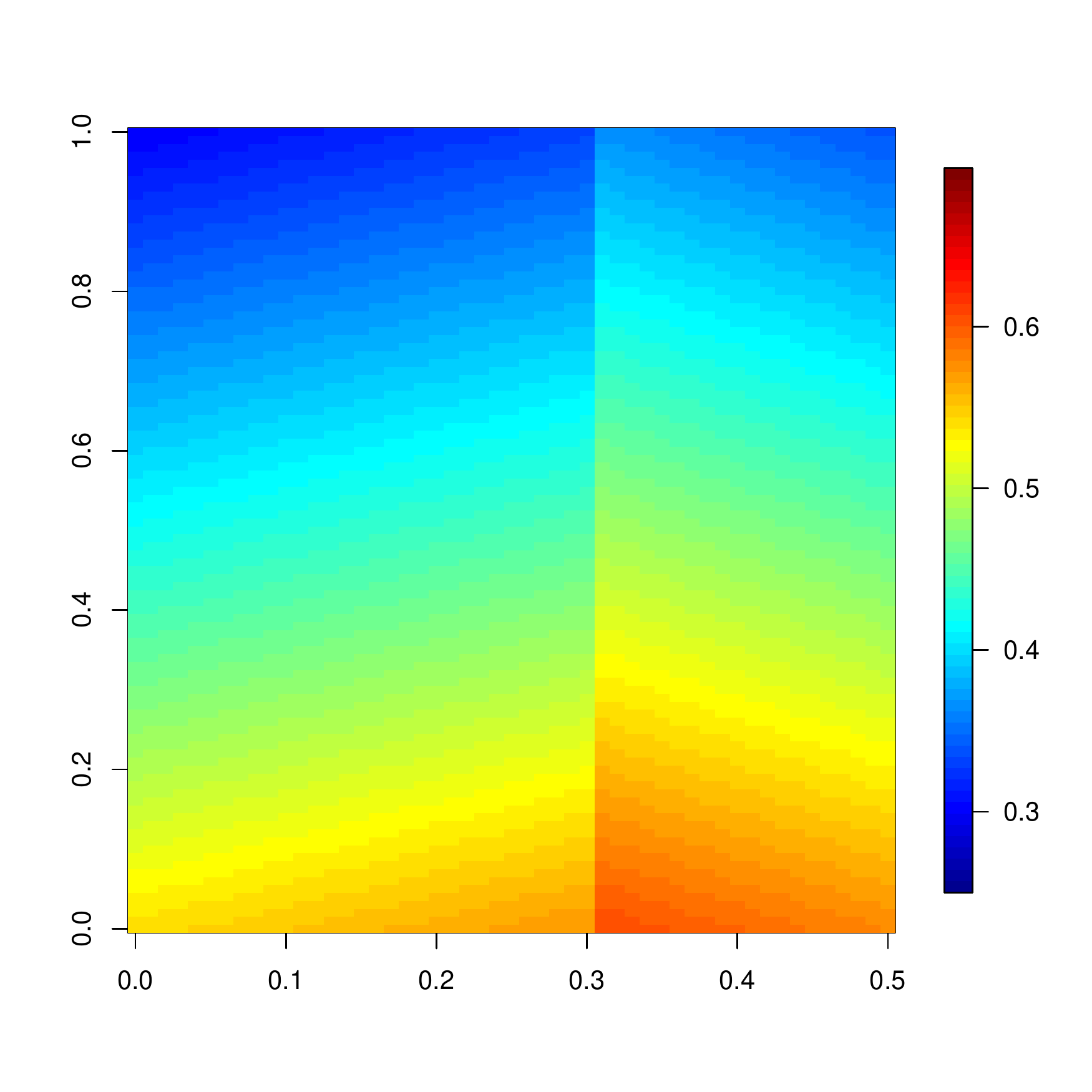}
        \caption{\tiny{Order-preserving.}}
    \end{subfigure}
    \caption{Sensitivity functions for Example \ref{ex:sensi} under different covariation schemes.\label{fig:sensi}}
    \end{figure}
\end{example}

\subsection{The Chan-Darwiche distance}
Whilst sensitivity functions study local changes, CD distances describe global variations in distributions \cite{Chan2005}. These can be used to study by how much two vectors of atomic probabilities vary in their distributional assumptions if one arises from the other via a covariation scheme. We are then interested in the global impact of that local change.

We next characterize the form of the CD distance for multilinear models in single full CPT analyses, first generalizing its form, again derived in \cite{Renooij2014} for BN models. We demonstrate that the distance depends only on the varied and covaried parameters: thus very easy to compute.

\begin{proposition}
\label{theo:CD}
Let $\bm{p}_\theta, \bm{p}_{\tilde{\theta}}\in\mathbb{P}_{\Psi}$, where $\mathbb{P}_{\Psi}$ is a multilinear parametric model and $\bm{p}_{\tilde{\theta}}$ arises from  $\bm{p}_\theta$ by varying $\theta_{j_i}$ to $\tilde{\theta}_{j_i}$ and $\theta_{j_s}\in\Theta_j\setminus\{\theta_{j_i}\}$ to $\tilde{\theta}_{j_s}=\sigma_j(\theta_{j_s},\tilde{\theta}_{j_i})$, where $\sigma_j(\theta_{j_s},\tilde{\theta}_{j_i})$ is a valid covariation scheme, $j\in[n], s\in[r_j]\setminus\{j_i\}$.  Then the CD distance between $\bm{p}_{\theta}$ and $\bm{p}_{\tilde{\theta}}$ is equal to
\begin{equation}
\label{eq:CDd}
\mathcal{D}_{\CD}(\bm{p}_\theta,\bm{p}_{\tilde{\theta}})=\log\max_{\substack{j\in[n] \\s\in[r_j]}}\frac{\tilde{\theta}_{j_s}}{\theta_{j_s}}-\log\min_{\substack{j\in[n] \\s\in[r_j]}}\frac{\tilde{\theta}_{j_s}}{\theta_{j_s}}.
\end{equation}
\end{proposition}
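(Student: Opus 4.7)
The plan is to compute the ratio $p_{\tilde\theta}(\bm y)/p_\theta(\bm y)$ explicitly for every atom $\bm y\in\bm{\mathbb Y}$, show that in the single full CPT setting it collapses to a single pair ratio $\tilde\theta_{j_s}/\theta_{j_s}$, and then take logs of the extrema as dictated by the definition of $\mathcal{D}_{\CD}$ in equation (\ref{eq:CD}). Since the model is multilinear, Definition \ref{def:multi} gives $p_\theta(\bm y)=\prod_{\ell\in S(\bm y)}\theta_\ell$, where $S(\bm y)=\{\ell:\alpha_{\ell,\bm y}=1\}$ indexes the squarefree support of the monomial, and the corresponding $p_{\tilde\theta}(\bm y)$ differs only in factors indexed by varied/covaried parameters in $\cup_{j\in[n]}\Theta_j$. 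Unchanged factors therefore cancel in the ratio, which becomes a product of at most $|S(\bm y)\cap\cup_j\Theta_j|$ factors of the form $\tilde\theta_\ell/\theta_\ell$.

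The critical combinatorial step is to argue that $|S(\bm y)\cap\cup_j\Theta_j|=1$ for every atom $\bm y$. I would reason as follows. Each set $\Theta_j$ is, by hypothesis, the full parameter list of one conditional distribution, so its members correspond to mutually exclusive outcomes at a common conditioning context and at most one of them can appear in any single monomial $p_\theta(\bm y)$. Moreover, in a single full CPT analysis, the family $\{\Theta_j\}_{j\in[n]}$ partitions the underlying conditioning space (parent configurations of one BN vertex, leaves of a CSI-tree, positions along a cut of a CEG, etc.), so each atom $\bm y$ realises exactly one such context and exactly one index $j(\bm y)\in[n]$ contributes. Combining these two facts yields a well-defined pair $(j(\bm y),s(\bm y))$ with
\[
\frac{p_{\tilde\theta}(\bm y)}{p_\theta(\bm y)}=\frac{\tilde\theta_{j(\bm y)_{s(\bm y)}}}{\theta_{j(\bm y)_{s(\bm y)}}}.
\]

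From here the argument closes quickly. As $\bm y$ ranges over $\bm{\mathbb Y}$, the induced map $\bm y\mapsto(j(\bm y),s(\bm y))$ is surjective onto $\{(j,s):j\in[n],\,s\in[r_j]\}$ because for every pair $(j,s)$ one can exhibit an atom whose conditioning context matches $j$ and whose outcome is $s$; otherwise the corresponding $\theta_{j_s}$ would not appear in the interpolating polynomial at all. Hence
\[
\max_{\bm y\in\bm{\mathbb Y}}\frac{p_{\tilde\theta}(\bm y)}{p_\theta(\bm y)}=\max_{j\in[n],\,s\in[r_j]}\frac{\tilde\theta_{j_s}}{\theta_{j_s}},
\]
and analogously for the minimum, after which substituting into (\ref{eq:CD}) gives (\ref{eq:CDd}). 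The main obstacle I anticipate is formulating the partition/covering property at a sufficient level of abstraction to cover \emph{all} multilinear models uniformly (BNs, context-specific BNs, CEGs), since in each case the notion of ``one parameter per atom from $\cup_j\Theta_j$'' rests on a model-specific structural fact rather than a purely polynomial one; articulating this carefully, rather than the ratio-cancellation arithmetic, is where the proof has to do its real work.
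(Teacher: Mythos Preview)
Your approach is essentially the paper's: compute the atomic ratio, observe that all unchanged factors cancel, reduce to a single $\tilde\theta_{j_s}/\theta_{j_s}$, and take extrema. The paper states the key structural fact (``no two parameters in $\cup_{j\in[n]}\Theta_j$ can have exponent non-zero in the same monomial'') as a standing feature of the single full CPT setup rather than proving it, so your worry about articulating it model-by-model is legitimate but is exactly the assumption the proposition is working under.

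One small discrepancy worth noting: you assert $|S(\bm y)\cap\bigcup_j\Theta_j|=1$ for \emph{every} atom, whereas the paper only uses $|S(\bm y)\cap\bigcup_j\Theta_j|\le 1$ and explicitly allows the set $\mathbb{A}\setminus\bigcup_{k}\mathbb{A}_k$ of atoms touched by \emph{no} varied parameter to be nonempty. For those atoms the ratio equals $1$, and the paper disposes of them by invoking validity of the covariation scheme to conclude $\max_{\bm\alpha\in\mathbb{A}_j}\tilde{\bm\theta}^{\bm\alpha}/\bm\theta^{\bm\alpha}\ge 1$ and $\min_{\bm\alpha\in\mathbb{A}_j}\tilde{\bm\theta}^{\bm\alpha}/\bm\theta^{\bm\alpha}\le 1$, so the constant $1$ never realises the extremum. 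Your ``partition'' reading (every atom hits exactly one context) is correct for BNs and for cuts in a CEG, but the paper's version is marginally more general and also sidesteps the need for your surjectivity argument in one direction. Conversely, you are more explicit than the paper about why every pair $(j,s)$ is actually attained by some atom; the paper leaves that implicit.
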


\begin{proof}
For a multilinear parametric model the CD distance can be written as 
\begin{align*}
\mathcal{D}_{\CD}(\bm{p}_\theta,\bm{p}_{\tilde{\theta}})&=\log\max_{\bm{\alpha}\in\mathbb{A}}\frac{\tilde{\bm{\theta}}^{\bm{\alpha}}}{\bm{\theta}^{\bm{\alpha}}}-\log\min_{\bm{\alpha}\in\mathbb{A}}\frac{\tilde{\bm{\theta}}^{\bm{\alpha}}}{\bm{\theta}^{\bm{\alpha}}}\\
&=\log\max\left\{\max_{\substack{j\in[n] \\ \bm{\alpha}\in\mathbb{A}_j}}\frac{\tilde{\bm{\theta}}^{\bm{\alpha}}}{\bm{\theta}^{\bm{\alpha}}},\max_{\bm{\alpha}\in\mathbb{A}\setminus\cup_{k\in[n]}\mathbb{A}_k}\frac{\tilde{\bm{\theta}}^{\bm{\alpha}}}{\bm{\theta}^{\bm{\alpha}}}\right\}-\log\min\left\{\min_{\substack{j\in[n] \\ \bm{\alpha}\in\mathbb{A}_j}}\frac{\tilde{\bm{\theta}}^{\bm{\alpha}}}{\bm{\theta}^{\bm{\alpha}}},\min_{\bm{\alpha}\in\mathbb{A}\setminus\cup_{k\in[n]}\mathbb{A}_k}\frac{\tilde{\bm{\theta}}^{\bm{\alpha}}}{\bm{\theta}^{\bm{\alpha}}}\right\}.
\end{align*}
If $\bm{\alpha}\in\mathbb{A}\setminus\cup_{k\in[n]}\mathbb{A}_k$, then $\tilde{\bm{\theta}}=\bm{\theta}$ and thus $\tilde{\bm{\theta}}^{\bm{\alpha}}/\bm{\theta}^{\bm{\alpha}}=1$. Because of the validity of the covariation scheme note that $\max_{\bm{\alpha}\in\mathbb{A}_j}\tilde{\bm{\theta}}^{\bm{\alpha}}/\bm{\theta}^{\bm{\alpha}}\geq 1$ and $\min_{\bm{\alpha}\in\mathbb{A}_j}\tilde{\bm{\theta}}^{\bm{\alpha}}/\bm{\theta}^{\bm{\alpha}}\leq 1$, for all $j\in[n]$. Thus
\[
\mathcal{D}_{\CD}(\bm{p}_\theta,\bm{p}_{\tilde{\theta}})=\log\max_{\substack{j\in[n] \\ \bm{\alpha}\in\mathbb{A}_j}}\frac{\tilde{\bm{\theta}}^{\bm{\alpha}}}{\bm{\theta}^{\bm{\alpha}}}-\log\min_{\substack{j\in[n] \\ \bm{\alpha}\in\mathbb{A}_j}}\frac{\tilde{\bm{\theta}}^{\bm{\alpha}}}{\bm{\theta}^{\bm{\alpha}}}.
\]
Now note that $\tilde{\bm{\theta}}=\bm{\theta}_{-j_s}\tilde{\theta}_{j_s}$, for a $\theta_{j_s}\in\Theta_j$, $j\in[n]$, since no two parameters in $\cup_{j\in[n]}\Theta_j$ can have exponent non-zero in the same monomial. Thus $\tilde{\bm{\theta}}^{\bm{\alpha}}/\bm{\theta}^{\bm{\alpha}}=\tilde{\theta}_{j_s}/\theta_{j_s}$ since $\bm{\alpha}\in\{0,1\}^k$ and the result follows.
\end{proof}
 
 We can now prove that the proportional covariation scheme is optimal for single full CPT analyses. This is important since a set of  parameters might be varied to change an uncalibrated probability of interest, but a user might want to achieve this by choosing a distribution as close as possible to the original one. Several authors have posed this problem for BNs without finding a definitive answer \citep{Chan2004,Renooij2014}. Here, exploiting our polynomial model representation, we can prove the optimality of the proportional scheme not only for BN models, but also for multilinear ones in single full CPT analyses.

\begin{theorem}
\label{theossimo}
Under the conditions of Proposition \ref{theo:CD} and proportional covariations $\sigma_j(\theta_{j_s},\theta_{j_i})$,  the CD distance between $\bm{p}_{\bm{\theta}}$ and $\bm{p}_{\tilde{\bm{\theta}}}$ is minimized and can be written in closed form as
\begin{equation}
\label{basta}
\mathcal{D}_{\CD}(\bm{p}_{\theta},\bm{p}_{\tilde{\theta}})=\log\max_{j\in[n]}\left\{\frac{\tilde{\theta}_{j_i}}{\theta_{j_i}},\frac{1-\tilde{\theta}_{j_i}}{1-\theta_{j_i}}\right\}-\log\min_{j\in[n]}\left\{\frac{\tilde{\theta}_{j_i}}{\theta_{j_i}},\frac{1-\tilde{\theta}_{j_i}}{1-\theta_{j_i}}\right\}.
\end{equation}
\end{theorem}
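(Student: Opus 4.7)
The plan is to derive the closed form first and then use a weighted-average argument to establish optimality. By Proposition \ref{theo:CD}, for any valid covariation the CD distance is determined by the range of ratios $\tilde{\theta}_{j_s}/\theta_{j_s}$ over all $j\in[n]$ and $s\in[r_j]$, so the whole proof reduces to controlling these ratios CPT by CPT.

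For the closed form, I would simply substitute the proportional scheme into the expression of Proposition \ref{theo:CD}. For $s\neq i$ we have $\tilde{\theta}_{j_s}/\theta_{j_s}=(1-\tilde{\theta}_{j_i})/(1-\theta_{j_i})$, a single value independent of $s$, while for $s=i$ the ratio is $\tilde{\theta}_{j_i}/\theta_{j_i}$. Hence within CPT $j$ only two distinct ratios occur, and taking the max and the min over $j\in[n]$ and these two values yields exactly equation (\ref{basta}).

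For optimality, fix $j\in[n]$ and any valid covariation, and observe the identity
\begin{equation*}
\frac{1-\tilde{\theta}_{j_i}}{1-\theta_{j_i}} \;=\; \frac{\sum_{s\neq i}\tilde{\theta}_{j_s}}{\sum_{s\neq i}\theta_{j_s}} \;=\; \sum_{s\neq i}\frac{\theta_{j_s}}{1-\theta_{j_i}}\cdot\frac{\tilde{\theta}_{j_s}}{\theta_{j_s}},
\end{equation*}
which expresses $(1-\tilde{\theta}_{j_i})/(1-\theta_{j_i})$ as a convex combination of the ratios $\tilde{\theta}_{j_s}/\theta_{j_s}$ for $s\neq i$. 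Therefore this number is sandwiched between $\min_{s\neq i}\tilde{\theta}_{j_s}/\theta_{j_s}$ and $\max_{s\neq i}\tilde{\theta}_{j_s}/\theta_{j_s}$, so
\begin{equation*}
\max_{s\in[r_j]}\frac{\tilde{\theta}_{j_s}}{\theta_{j_s}}\;\geq\;\max\!\left\{\frac{\tilde{\theta}_{j_i}}{\theta_{j_i}},\frac{1-\tilde{\theta}_{j_i}}{1-\theta_{j_i}}\right\},\qquad \min_{s\in[r_j]}\frac{\tilde{\theta}_{j_s}}{\theta_{j_s}}\;\leq\;\min\!\left\{\frac{\tilde{\theta}_{j_i}}{\theta_{j_i}},\frac{1-\tilde{\theta}_{j_i}}{1-\theta_{j_i}}\right\},
\end{equation*}
with equality in both if and only if all ratios for $s\neq i$ are equal, i.e.\ under proportional covariation. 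Taking $\max$ and $\min$ over $j\in[n]$ in Proposition \ref{theo:CD} then gives $\mathcal{D}_{\CD}(\bm{p}_\theta,\bm{p}_{\tilde{\theta}})$ bounded below by the right-hand side of (\ref{basta}), with equality attained under proportional covariation.

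The main conceptual step is the weighted-average observation; once it is in hand both optimality and the closed form follow by inspection. The only subtlety will be verifying that the max/min over all $j$ and $s$ can be split cleanly, which uses the disjointness assumption $\cap_{j\in[n]}\Theta_j=\emptyset$ already invoked in Proposition \ref{theo:CD}, so that the feasibility constraints decouple across the $n$ CPTs and each one can be optimized independently.
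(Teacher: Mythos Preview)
Your proof is correct and uses essentially the same approach as the paper: the core step in both is the observation that $(1-\tilde{\theta}_{j_i})/(1-\theta_{j_i})$ is a weighted average of the ratios $\tilde{\theta}_{j_s}/\theta_{j_s}$ for $s\neq i$, which forces the per-CPT max and min to dominate the two proportional values, and then one takes max/min over $j\in[n]$ via Proposition~\ref{theo:CD}. Your presentation is slightly more streamlined than the paper's, which splits into the cases $\tilde{\theta}_{j_i}>\theta_{j_i}$ and $\tilde{\theta}_{j_i}<\theta_{j_i}$ before applying the same weighted-average inequality, whereas your convex-combination formulation handles both directions and both the max and min bounds at once.
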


\begin{proof}
First note that we can write equation (\ref{eq:CDd}) as
\begin{equation}
\label{inproof1}
\mathcal{D}_{\CD}(\bm{p}_\theta,\bm{p}_{\tilde{\theta}})=
\log\max\left\{\max_{s\in[r_1]}\frac{\tilde{\theta}_{1_s}}{\theta_{1_s}},\dots,\max_{s\in[r_n]}\frac{\tilde{\theta}_{n_s}}{\theta_{n_s}}\right\}-\log\min\left\{\min_{s\in[r_1]}\frac{\tilde{\theta}_{1_s}}{\theta_{1_s}},\dots,\min_{s\in[r_s]}\frac{\tilde{\theta}_{n_s}}{\theta_{n_s}}\right\}.
\end{equation}
Now, let $\bar{\theta}_{j_i}=\tilde{\theta}_{j_i}$ and suppose $\bar{\theta}_{j_s}\in\Theta_{j}\setminus\{\theta_{j_i}\}$ is obtained via a valid covariation scheme, $j\in[n]$, $s\in[r_j]$. We want to prove that $\mathcal{D}_{\D}(\bm{p}_{\theta},\bm{p}_{\bar{\theta}})\geq \mathcal{D}_{\D}(\bm{p}_{\theta},\bm{p}_{\tilde{\theta}})$. Suppose now the proportional scheme is optimal for one-way sensitivity analyses. If this is true, we must have that, for all $j\in[n]$,
\[
\max_{s\in[r_j]}\frac{\bar{\theta}_{j_s}}{\theta_{j_s}}\geq \max_{s\in[r_j]}\frac{\tilde{\theta}_{j_s}}{\theta_{j_s}}, \hspace{1cm} \mbox{and}\hspace{1cm} \min_{s\in[r_j]}\frac{\bar{\theta}_{j_s}}{\theta_{j_s}}\leq \min_{s\in[r_j]}\frac{\tilde{\theta}_{j_s}}{\theta_{j_s}}.
\]
Therefore,
\[
\max\left\{\max_{s\in[r_1]}\frac{\bar{\theta}_{1_s}}{\theta_{1_s}},\dots,\max_{s\in[r_n]}\frac{\bar{\theta}_{n_s}}{\theta_{n_s}}\right\}\geq \max\left\{\max_{s\in[r_1]}\frac{\tilde{\theta}_{1_s}}{\theta_{1_s}},\dots,\max_{s\in[r_n]}\frac{\tilde{\theta}_{n_s}}{\theta_{n_s}}\right\},\]
and
\[
\min\left\{\min_{s\in[r_1]}\frac{\bar{\theta}_{1_s}}{\theta_{1_s}},\dots,\min_{s\in[r_n]}\frac{\bar{\theta}_{n_s}}{\theta_{n_s}}\right\}\leq \min\left\{\min_{s\in[r_1]}\frac{\tilde{\theta}_{1_s}}{\theta_{1_s}},\dots,\min_{s\in[r_n]}\frac{\tilde{\theta}_{n_s}}{\theta_{n_s}}\right\},
\]
from which the optimality condition follows.

We thus have to prove that for a single parameter change, the proportional covariation scheme minimizes the CD distance in any multilinear model. The proof follows similar steps to the one in \citep{Chan2002} for BNs. Fix $j\in[n]$ and note that if either $\theta_{j_i}=0$ or $\theta_{j_i}=1$ then the distance is infinite under both covariation schemes and the result holds. Consider now $\theta_{j_i}\in(0,1)$ and suppose $\bar{\theta}_{j_i}=\tilde{\theta}_{j_i}>\theta_{j_i}$. Under a proportional scheme, we have that 
\[
\max_{s\in[r_j]}\frac{\tilde{\theta}_{j_s}}{\theta_{j_s}}=\frac{\tilde{\theta}_{j_i}}{\theta_{j_i}} \,\,\,\, \mbox{ and }\,\,\,\, \min_{s\in[r_j]}\frac{\tilde{\theta}_{j_s}}{\theta_{j_s}}=\frac{\theta_{j_s}(1-\tilde{\theta}_{j_i})}{(\theta_{j_s}(1-\theta_{j_i}))}=\frac{(1-\tilde{\theta}_{j_i})}{(1-\theta_{j_i})}.
\]
Conversely, for the generic covariation scheme $\sigma(\theta_{j_s},\bar{\theta}_{j_i})$ we have that
\begin{align*}
\frac{1-\bar{\theta}_{j_i}}{1-\theta_{j_i}}&=\frac{\sum_{s\in[r_j]\setminus \{j_i\}}\bar{\theta}_{j_s}}{\sum_{s\in[r_j]\setminus \{j_i\}}\theta_{j_s}}=\frac{\sum_{s\in[r_j]\setminus \{j_i\}}\theta_{j_s}(\bar{\theta}_{j_s}/\theta_{j_s})}{\sum_{s\in[r_j]\setminus \{j_i\}}\theta_{j_s}}\\
&\geq \frac{\sum_{s\in[r_j]\setminus \{j_i\}}\theta_{j_s}(\min_{k\in[r_j]}\bar{\theta}_k/\theta_k)}{\sum_{s\in[r_j]\setminus \{j_i\}}\theta_{j_s}}=\min_{s\in[r_j]}\frac{\bar{\theta}_s}{\theta_s}.
\end{align*}
Thus since $(1-\bar{\theta}_{j_i})/(1-\theta_{j_i})=(1-\tilde{\theta}_{j_i})/(1-\theta_{j_i})$ we have that $\min_{s\in[r_j]}\tilde{\theta}_{j_s}/\theta_{j_s}\geq \min_{s\in[r_j]}\bar{\theta}_{j_s}/\theta_{j_s}$. Furthermore, 
\[\max_{s\in[r_j]}\frac{\bar{\theta}_{j_s}}{\theta_{j_s}}\geq \frac{\bar{\theta}_{j_i}}{\theta_{j_i}}=\frac{\tilde{\theta}_{j_i}}{\theta_{j_i}}=\max_{s\in[r_j]}\frac{\tilde{\theta}_{j_s}}{\theta_{j_s}}.
\]
 It then follows that $\mathcal{D}_{\CD}(\bm{p}_\theta,\bm{p}_{\bar{\theta}})\geq \mathcal{D}_{\CD}(\bm{p}_\theta,\bm{p}_{\tilde{\theta}})$ when $\tilde{\theta}_{j_i}>\theta_{j_i}$ for one-way analyses. For the case $\tilde{\theta}_{j_i}<\theta_{j_i}$ the proof mirrors the one presented here.  The explicit form of the distance under a proportional covariation schemes in equation (\ref{basta}) follows  by noting that the maximum and the minimum can either be $\tilde{\theta}_{j_i}/\theta_{j_i}$ or $(1-\tilde{\theta}_{j_i})/(1-\theta_{j_i})$. 
\end{proof}   

\begin{example}
\label{ex:CD}
\begin{figure}
    \centering
    \begin{subfigure}[b]{0.32\textwidth}
        \includegraphics[width=\textwidth]{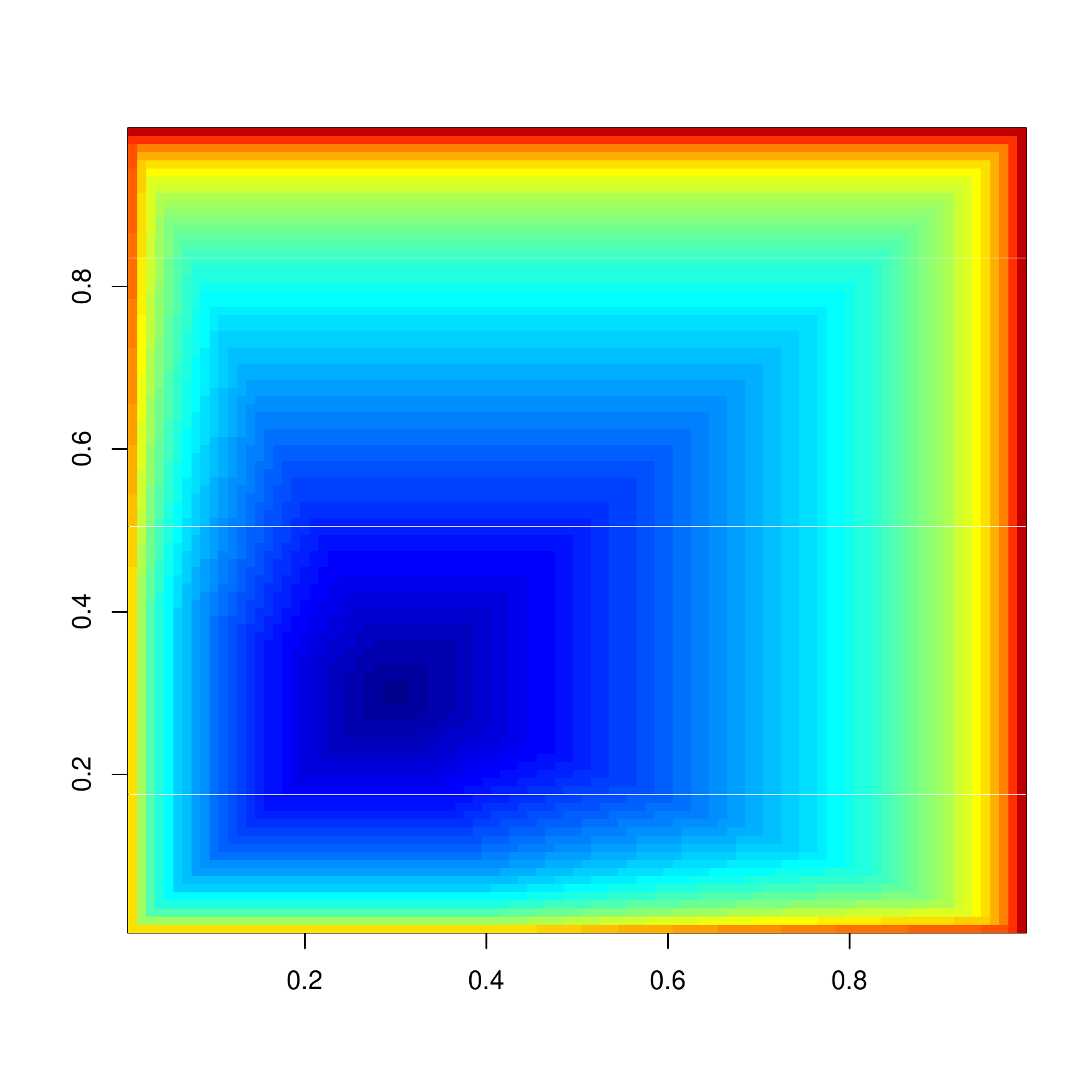}
        \caption{\tiny{Proportional.}\label{prop}}
    \end{subfigure}
    ~ 
    \begin{subfigure}[b]{0.32\textwidth}
        \includegraphics[width=\textwidth]{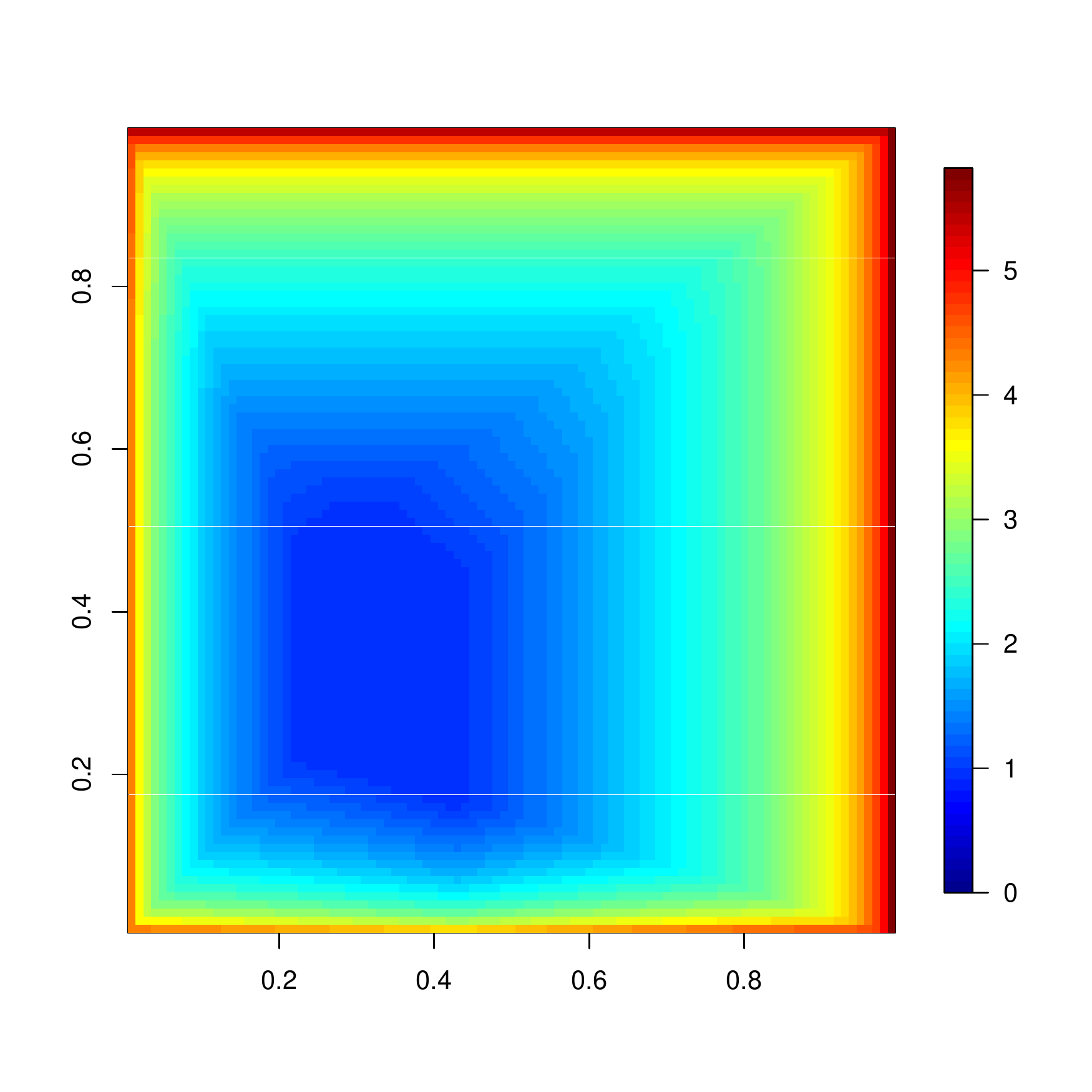}
        \caption{\tiny{Uniform.}\label{uni}}
    \end{subfigure}
     ~ 
    \begin{subfigure}[b]{0.32\textwidth}
        \includegraphics[width=\textwidth]{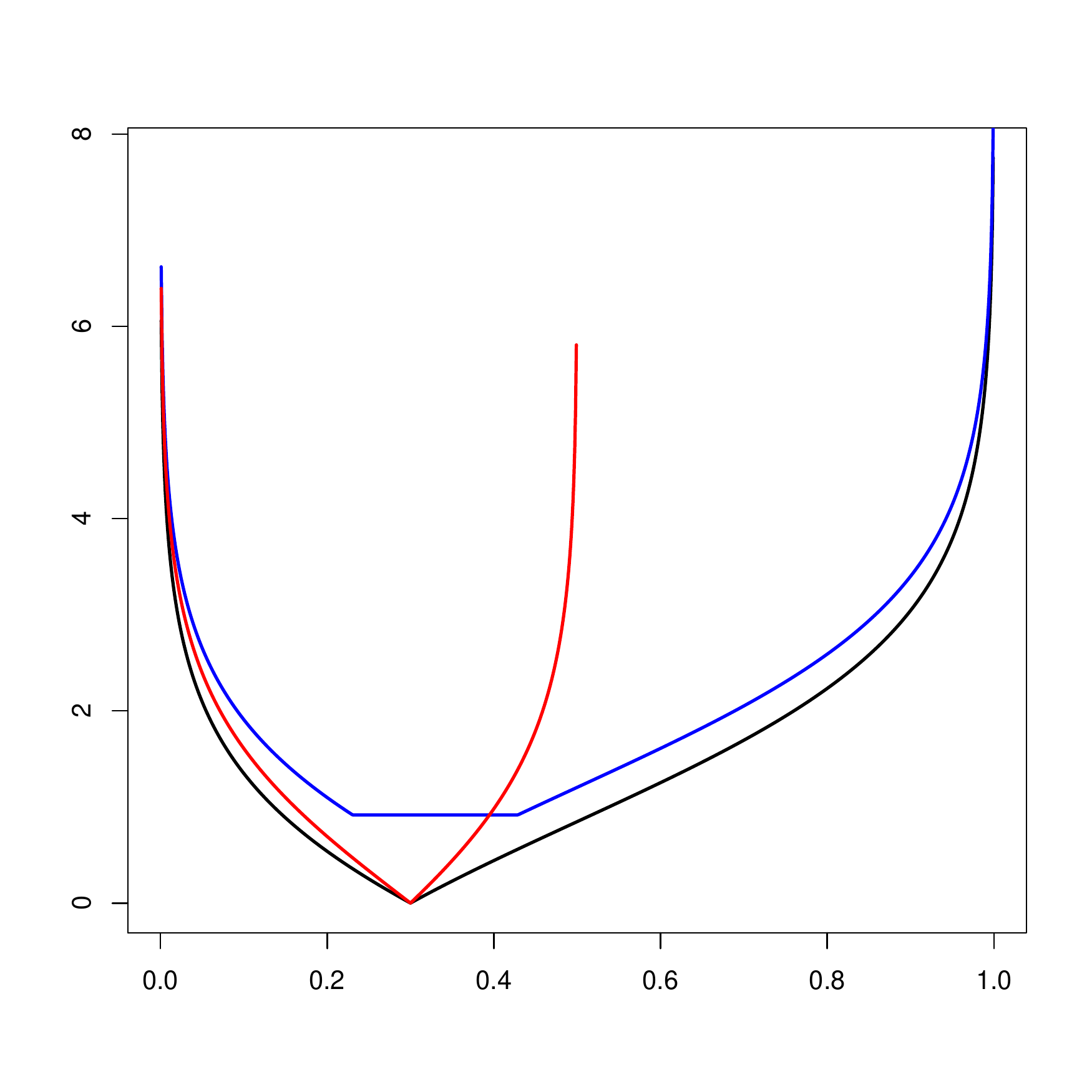}
        \caption{\tiny{One-way analysis.}\label{CD}}
    \end{subfigure}
    \caption{CD distances for Example \ref{ex:CD} under different covariation schemes: proportional (black), uniform (blue), order-preserving (red).\label{fig:CD}}
    \end{figure}
    
 In Figure \ref{fig:CD} we plott the CD distance between the varied and the original probability distributions for Example \ref{ex:CSBN} when $\theta_{2_11_1}$ (x-axis) and $\theta_{21_0}$ (y-axis in \ref{prop} and \ref{uni}) are varied for the covariation schemes so far considered. From Figures \ref{prop} and \ref{uni} we can get an intuition that the distance under proportional covariation is smaller than in the uniform case. This becomes clearer when we only let $\theta_{2_11_1}$ vary as shown in Figure \ref{CD} since the black line is always underneath the others.
\end{example}

\subsection{$\phi$-divergences}
Although the CD distance is widely used in sensitivity analyses, comparisons between two generic distributions are usually performed by computing the KL divergence. For one-way sensitivity analysis in BNs, the KL divergence equals the KL divergence between the original and varied conditional probability distribution of the manipulated parameter times the marginal probability of the conditioning parent configuration. This means that one way sensitivity analyses based on KL distances can become computationally infeasible, since this constant term might need to be computed an arbitrary large number of times. In Proposition \ref{prop:phi} below we demonstrate that this property is common to any $\phi$-divergence for any multilinear model and single full CPT analyses.
    
\begin{proposition}
\label{prop:phi}
Let $\bm{p}_\theta, \bm{p}_{\tilde{\theta}}\in\mathbb{P}_{\Psi}$, where $\mathbb{P}_{\Psi}$ is a multilinear parametric model and $\bm{p}_{\tilde{\theta}}$ arises from  $\bm{p}_\theta$ by varying $\theta_{j_i}$ to $\tilde{\theta}_{j_i}$ and $\theta_{j_s}\in\Theta_j\setminus\{\theta_{j_i}\}$ to $\tilde{\theta}_{j_s}=\sigma_j(\theta_{j_s},\tilde{\theta}_{j_i})$, where $\sigma_j(\theta_{j_s},\tilde{\theta}_{j_i})$ is a valid covariation scheme, $j\in[n], s\in[r_j]\setminus\{j_i\}$.  Then the $\phi$-divergence between $\bm{p}_{\tilde{\theta}}$ and $\bm{p}_{\theta}$ is equal to
\begin{equation}
\label{eq:phid}
\mathcal{D}_{\phi}(\bm{p}_{\tilde{\theta}},\bm{p}_{\theta})=\sum_{j\in[n]}\mathcal{D}_{\phi}(\bm{p}^j_{\tilde{\theta}},\bm{p}^j_{\theta})\sum_{s\in[r_j]}\sum_{\bm{\alpha}\in \mathbb{A}_{-j_s}}\bm{\theta}_{-j_s}^{\bm{\alpha}},
\end{equation}
where $\bm{p}^j_{\theta}$ denotes the vector of atomic probabilities in $\Theta_j$.
\end{proposition}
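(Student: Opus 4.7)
The plan is to expand $\mathcal{D}_{\phi}$ from its definition, use the monomial parametrisation to reduce each atomic ratio to a single-parameter ratio via the multilinear structure, and then recognise the CPT-wise $\phi$-divergence. Two structural consequences of the single full CPT set-up do the heavy lifting: because the $\Theta_j$ correspond to CPTs of the same vertex indexed by distinct parent configurations, the exponent sets $\mathbb{A}_j$ are pairwise disjoint and together exhaust $\mathbb{A}$, and within each $\mathbb{A}_j$ the subsets $\mathbb{A}_{j_s}$, $s\in[r_j]$, form a further partition.

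Starting from $\mathcal{D}_{\phi}(\bm{p}_{\tilde{\theta}},\bm{p}_{\theta}) = \sum_{\bm{\alpha}\in\mathbb{A}} \bm{\theta}^{\bm{\alpha}} \phi(\tilde{\bm{\theta}}^{\bm{\alpha}}/\bm{\theta}^{\bm{\alpha}})$, I would split the sum according to the unique $(j,s)$ with $\bm{\alpha}\in\mathbb{A}_{j_s}$. Multilinearity ($\bm{\alpha}\in\{0,1\}^k$) together with $\bigcap_{j}\Theta_{j}=\emptyset$ forces exactly one element of $\bigcup_{j}\Theta_{j}$, namely $\theta_{j_s}$, to divide $\bm{\theta}^{\bm{\alpha}}$, so the ratio collapses to $\tilde{\theta}_{j_s}/\theta_{j_s}$. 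Using the factorisation $\bm{\theta}^{\bm{\alpha}} = \theta_{j_s}\,\bm{\theta}_{-j_s}^{\bm{\alpha}}$ valid for $\bm{\alpha}\in\mathbb{A}_{j_s}$ then yields
$$\mathcal{D}_{\phi}(\bm{p}_{\tilde{\theta}},\bm{p}_{\theta}) = \sum_{j\in[n]}\sum_{s\in[r_j]} \theta_{j_s}\,\phi\!\left(\frac{\tilde{\theta}_{j_s}}{\theta_{j_s}}\right) \sum_{\bm{\alpha}\in\mathbb{A}_{-j_s}} \bm{\theta}_{-j_s}^{\bm{\alpha}}.$$
To match the stated form, I would pull the inner polynomial sum outside the $s$-summation by showing that $\sum_{\bm{\alpha}\in\mathbb{A}_{-j_s}} \bm{\theta}_{-j_s}^{\bm{\alpha}}$ is independent of $s$, and identify the remaining factor $\sum_{s}\theta_{j_s}\phi(\tilde{\theta}_{j_s}/\theta_{j_s})$ as $\mathcal{D}_{\phi}(\bm{p}^{j}_{\tilde{\theta}},\bm{p}^{j}_{\theta})$, the divergence between the original and varied CPTs.

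The main obstacle is exactly this $s$-independence. I would argue it directly from the CPT structure underlying the multilinear model: stripping $\theta_{j_s}$ out of each monomial indexed by $\mathbb{A}_{j_s}$ leaves the collection of contexts (configurations of every variable other than the one whose CPTs are varied) compatible with the $j$-th parent configuration, a combinatorial object that plainly does not depend on the value $s$ of that vertex. Applying the sum-to-one identities for every CPT different from $\Theta_j$, in particular telescoping over the descendants of the varied vertex, collapses the polynomial sum to the marginal probability $\Pr(\bm{\pi}_j)$ of the $j$-th parent configuration, which witnesses the required $s$-independence. Once this telescoping identity is established, recovering the paper's symmetric display $\sum_{s}\sum_{\bm{\alpha}\in\mathbb{A}_{-j_s}} \bm{\theta}_{-j_s}^{\bm{\alpha}}$ is pure bookkeeping.
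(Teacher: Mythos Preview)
Your overall line mirrors the paper's: expand $\mathcal{D}_\phi$ over $\mathbb{A}$, use multilinearity and the disjointness of the $\Theta_j$ to reduce each atomic ratio to $\tilde\theta_{j_s}/\theta_{j_s}$, factor as $\theta_{j_s}\,\bm\theta_{-j_s}^{\bm\alpha}$, and then extract the CPT-wise divergence. Two points, however, need fixing.

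First, the $\mathbb{A}_j$ do \emph{not} in general exhaust $\mathbb{A}$. The proposition's hypotheses only demand that the $\Theta_j$ be disjoint and that no two parameters from $\bigcup_j\Theta_j$ share a monomial; nothing forces every atom to contain some element of $\bigcup_j\Theta_j$ (e.g.\ if one varies only some rows of a vertex's CPT, or works along a cut in a CEG). The paper keeps the complement $\mathbb{A}\setminus\bigcup_j\mathbb{A}_j$ explicitly and kills it using $\phi(1)=0$, since $\tilde{\bm\theta}^{\bm\alpha}/\bm\theta^{\bm\alpha}=1$ there. You need that step; your partition assumption is an additional, unwarranted restriction.

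Second, your ``pure bookkeeping'' does not deliver the displayed formula. If $C_j:=\sum_{\bm\alpha\in\mathbb{A}_{-j_s}}\bm\theta_{-j_s}^{\bm\alpha}$ is indeed $s$-independent, then pulling it through $\sum_s \theta_{j_s}\phi(\tilde\theta_{j_s}/\theta_{j_s})\,C_j$ gives $C_j\cdot\mathcal{D}_\phi(\bm p^j_{\tilde\theta},\bm p^j_\theta)$, whereas the paper's display carries $\sum_{s\in[r_j]}\sum_{\bm\alpha\in\mathbb{A}_{-j_s}}\bm\theta_{-j_s}^{\bm\alpha}=r_j\,C_j$; these differ by the factor $r_j$, so it is not bookkeeping at all. (The paper's own proof simply writes the final equality without comment, so you are right that something extra is needed here; but what you prove is the constant $C_j$, not $r_j C_j$.) Note also that your $s$-independence argument leans on BN-specific structure---parents, descendants, telescoping over downstream CPTs---while the proposition is stated for arbitrary multilinear models such as CEGs and context-specific BNs, where that vocabulary does not directly apply.
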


\begin{proof}
For a model with monomial parametrisation the $\phi$-divergence can be written as 
\begin{align*}
\mathcal{D}_{\phi}(\bm{p}_{\tilde{\theta}},\bm{p}_{\theta})&=\sum_{\bm{\alpha}\in \mathbb{A}}\bm{\theta}^{\bm{\alpha}}\phi\left(\frac{\tilde{\bm{\theta}}^{\bm{\alpha}}}{\bm{\theta}^{\bm{\alpha}}}\right)=\sum_{j\in[n]}\sum_{\bm{\alpha}\in\mathbb{A}_j}\bm{\theta}^{\bm{\alpha}}\phi\left(\frac{\tilde{\bm{\theta}}^{\bm{\alpha}}}{\bm{\theta}^{\bm{\alpha}}}\right)+\sum_{\bm{\alpha}\in\mathbb{A}\setminus\cup_{j\in[n]}\mathbb{A}_j}\bm{\theta}^{\bm{\alpha}}\phi\left(\frac{\tilde{\bm{\theta}}^{\bm{\alpha}}}{\bm{\theta}^{\bm{\alpha}}}\right).
\end{align*}
Notice that for $\bm{\alpha}\in\mathbb{A}\setminus\cup_{j\in[n]}\mathbb{A}_j$, $\tilde{\bm{\theta}}^{\bm{\alpha}}/\bm{\theta}^{\bm{\alpha}}=1$. Thus, since $\phi(1)=0$, we then have that
\begin{align*}
\mathcal{D}_{\phi}(\bm{p}_{\tilde{\theta}},\bm{p}_{\theta})&=\sum_{j\in[n]}\sum_{\bm{\alpha}\in\mathbb{A}_j}\bm{\theta}^{\bm{\alpha}}\phi\left(\frac{\tilde{\bm{\theta}}^{\bm{\alpha}}}{\bm{\theta}^{\bm{\alpha}}}\right)= \sum_{j\in[n]}\sum_{s\in[r_j]}\sum_{\bm{\alpha}\in\mathbb{A}_{-j_s}}\bm{\theta}_{-j_s}^{\bm{\alpha}}\theta_{j_s}\phi\left(\frac{\bm{\theta}_{-j_s}^{\bm{\alpha}}\tilde{\theta}_{j_s}}{\bm{\theta}_{-j_s}^{\bm{\alpha}}\theta_{j_s}}\right)\\
&=\sum_{j\in[n]}\sum_{s\in[r_j]}\theta_{j_s}\phi\left(\frac{\tilde{\theta}_{j_s}}{\theta_{j_s}}\right)\sum_{\bm{\alpha}\in\mathbb{A}_{-j_s}}\bm{\theta}_{-j_s}^{\bm{\alpha}}=\sum_{j\in[n]}\mathcal{D}_{\phi}(\bm{p}^j_{\tilde{\theta}},\bm{p}^j_{\theta})\sum_{s\in[r_j]}\sum_{\bm{\alpha}\in \mathbb{A}_{-j_s}}\bm{\theta}_{-j_s}^{\bm{\alpha}}.
\end{align*}
\end{proof}

The additional complexity of having to compute the constant term in equation (\ref{eq:phid}) has limited the use of KL divergences, and more generally $\phi$-divergences, in both practical and theoretical sensitivity investigations in discrete BNs. However, looking at probabilistic models from a polynomial point of view, we are able here to establish an additional strong theoretical justification for the use proportional covariation even in  single full CPT analyses, since this also minimizes any $\phi$-divergence.

\begin{theorem}
\label{theo:phi}
Under the conditions of Proposition \ref{prop:phi},  $\mathcal{D}_{\phi}(\bm{p}_{\tilde{\theta}},\bm{p}_{\theta})$ is minimized by the proportional covariation schemes $\sigma_j(\theta_{j_s},\tilde{\theta}_{j_i})=(1-\tilde{\theta}_{j_i})\theta_{j_s}/(1-\theta_{j_i})$.
\end{theorem}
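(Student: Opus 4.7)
The plan is to exploit the factorization already established in Proposition~\ref{prop:phi}. Writing the divergence as
\[
\mathcal{D}_{\phi}(\bm{p}_{\tilde{\theta}},\bm{p}_{\theta})=\sum_{j\in[n]}\mathcal{D}_{\phi}(\bm{p}^j_{\tilde{\theta}},\bm{p}^j_{\theta})\,K_j, \qquad K_j\;=\;\sum_{s\in[r_j]}\sum_{\bm{\alpha}\in \mathbb{A}_{-j_s}}\bm{\theta}_{-j_s}^{\bm{\alpha}},
\]
one notices that each $K_j$ is strictly positive and depends only on the original parameters outside of $\Theta_j$, so is unaffected by the covariation. Because $\cap_{j\in[n]}\Theta_j=\emptyset$ by assumption, the $n$ covariation problems decouple completely. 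Hence it suffices to show, for each fixed $j$, that proportional covariation minimizes the local $\phi$-divergence $\mathcal{D}_{\phi}(\bm{p}^j_{\tilde{\theta}},\bm{p}^j_{\theta})$ over all valid covariation schemes that fix $\tilde\theta_{j_i}$.

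For fixed $j$ I would write the local divergence as
\[
\mathcal{D}_{\phi}(\bm{p}^j_{\tilde{\theta}},\bm{p}^j_{\theta})=\theta_{j_i}\phi\!\left(\frac{\tilde\theta_{j_i}}{\theta_{j_i}}\right)+\sum_{s\in[r_j]\setminus\{j_i\}}\theta_{j_s}\phi\!\left(\frac{\tilde\theta_{j_s}}{\theta_{j_s}}\right).
\]
The first term is determined by $\tilde\theta_{j_i}$ alone, so the minimization reduces to the second sum under the linear constraint $\sum_{s\neq j_i}\tilde\theta_{j_s}=1-\tilde\theta_{j_i}$. Introducing weights $w_s=\theta_{j_s}/(1-\theta_{j_i})$ (which sum to one) and applying Jensen's inequality to the convex function $\phi$, I obtain
\[
\sum_{s\neq j_i}\theta_{j_s}\phi\!\left(\frac{\tilde\theta_{j_s}}{\theta_{j_s}}\right)=(1-\theta_{j_i})\sum_{s\neq j_i}w_s\,\phi\!\left(\frac{\tilde\theta_{j_s}}{\theta_{j_s}}\right)\ge(1-\theta_{j_i})\,\phi\!\left(\sum_{s\neq j_i}w_s\,\frac{\tilde\theta_{j_s}}{\theta_{j_s}}\right)=(1-\theta_{j_i})\,\phi\!\left(\frac{1-\tilde\theta_{j_i}}{1-\theta_{j_i}}\right),
\]
since the weighted average of the ratios collapses to $(1-\tilde\theta_{j_i})/(1-\theta_{j_i})$ via the sum-to-one constraint. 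Equality in Jensen's inequality holds when all ratios $\tilde\theta_{j_s}/\theta_{j_s}$ are equal, and combined with the sum-to-one constraint this forces $\tilde\theta_{j_s}=\theta_{j_s}(1-\tilde\theta_{j_i})/(1-\theta_{j_i})$, i.e.\ exactly the proportional scheme $\sigma_j$ of Definition~\ref{def:schemes}.

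Putting the pieces back together, $\mathcal{D}_{\phi}(\bm{p}_{\tilde\theta},\bm{p}_\theta)$ is a nonnegative linear combination of local divergences, each minimized by proportional covariation, so the global divergence is minimized by choosing $\sigma_j$ proportional for every $j\in[n]$. The main obstacle I anticipate is handling the degenerate boundary cases, namely $\theta_{j_i}\in\{0,1\}$ or some $\theta_{j_s}=0$ with $s\neq j_i$. These are taken care of by the boundary conventions $0\phi(0/0)=0$ and $0\phi(x/0)=\lim_{x\to\infty}\phi(x)/x$ used in the definition of $\phi$-divergence: the case $\theta_{j_i}=1$ makes the constraint trivial, while the case $\theta_{j_s}=0$ simply drops out of the Jensen step. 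A secondary subtlety is that $\phi$ need not be strictly convex, so the minimizer need not be unique; the statement only asserts that proportional covariation is a minimizer, which is exactly what the equality case of Jensen's inequality delivers.
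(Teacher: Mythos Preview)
Your proposal is correct and takes a genuinely different route from the paper. Both arguments begin identically, using Proposition~\ref{prop:phi} to reduce the global minimization to $n$ decoupled local problems, since the constants $K_j$ are positive and independent of the covaried parameters. The divergence is at the local step: the paper sets up a Lagrange multiplier problem, differentiates to obtain $\phi'(\tilde\theta_{j_s}/\theta_{j_s})=\lambda$ for all $s\neq j_i$, inverts to get $\tilde\theta_{j_s}=(\phi')^{-1}(\lambda)\,\theta_{j_s}$, and then solves for the multiplier via the constraint, appealing to convexity only at the end to certify the stationary point as a minimum. Your argument instead applies Jensen's inequality directly with weights $w_s=\theta_{j_s}/(1-\theta_{j_i})$, obtaining the lower bound in one line and identifying the proportional scheme as the equality case. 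Your approach is slightly more robust: it never requires $\phi$ to be differentiable or $\phi'$ to be invertible, assumptions the paper's Lagrangian computation uses implicitly but which are not part of the definition of the class $\Phi$. Your explicit treatment of the boundary cases and of non-strict convexity is also an improvement, as the paper's proof does not address either.
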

\begin{proof}
Since $\sum_{s\in[r_j]}\sum_{\bm{\alpha}\in \mathbb{A}_{-j_s}}\bm{\theta}_{-j_s}^{\bm{\alpha}}$ in equation (\ref{eq:phid}) is a positive constant, $\mathcal{D}_{\phi}(\bm{p}_{\tilde{\theta}},\bm{p}_{\theta})$ is minimized if each $\mathcal{D}_{\phi}(\bm{p}^j_{\tilde{\theta}},\bm{p}^j_{\theta})$ attains its minimum. Fix a $j\in[n]$. We use the method of Lagrange multipliers to demonstrate that $\mathcal{D}_{\phi}(\bm{p}^j_{\tilde{\theta}},\bm{p}^j_{\theta})$ is minimized by proportional covariation, subject to the constraint that $\sum_{s\in[r_j]}\tilde{\theta}_{j_s}-1=0$. Define 
\[
L=\sum_{s\in[r_j]}\theta_{j_s}\phi\left(\frac{\tilde{\theta}_{j_s}}{\theta_{j_s}}\right)-\lambda\left(\sum_{s\in[r_j]}\tilde{\theta}_{j_s}-1\right).
\]
Taking the first derivative of $L$ with respect to $\tilde{\theta}_{j_s}$ and equating it to zero gives
\[
\frac{\partial}{\partial\tilde{\theta}_{j_s}}L=\phi'\left(\frac{\tilde{\theta}_{j_s}}{\theta_{j_s}}\right)=\lambda,
\]
where $\phi'$ denotes the derivative of $\phi$. By inverting we then deduce that
\begin{equation}
\label{eq:blabla}
\tilde{\theta}_{j_s}=\phi'(\lambda)^{-1}\theta_{j_s}. 
\end{equation}
Since equation (\ref{eq:blabla}) holds for every $s\in[r_j]\setminus\{j_i\}$ we have that 
\begin{equation}
\label{blablabla}
\sum_{s\in[r_j]\setminus\{j_i\}}\tilde{\theta}_{j_s}=\phi'(\lambda)^{-1}\sum_{s\in[r_j]\setminus\{j_i\}}\theta_{j_s}
\end{equation}
Now take the first partial derivative of $L$ with respect to $\lambda$ and equate it to zero. This gives
\begin{equation}
\label{bla}
\frac{\partial}{\partial\lambda}L=\sum_{s\in[r_j]}\tilde{\theta}_{j_s}=1 \ \ \ \ \Longrightarrow \ \ \ \  \sum_{s\in[r_j]\setminus\{j_i\}}\tilde{\theta}_{j_s}=1-\tilde{\theta}_{j_i}
\end{equation}
Plugging the right hand side of (\ref{bla}) into (\ref{blablabla}), we deduce that 
\begin{equation}
\label{ult}
\phi'(\lambda)^{-1}=\frac{1-\tilde{\theta}_{j_i}}{\sum_{s\in[r_j]\setminus\{j_i\}}\theta_{j_s}}= \frac{1-\tilde{\theta}_{j_i}}{1-\theta_{j_i}}.
\end{equation}
Thus, by plugging (\ref{ult}) into (\ref{eq:blabla}) we conclude that 
\[
\tilde{\theta}_{j_s}=\frac{1-\tilde{\theta}_{j_i}}{1-\theta_{j_i}}\theta_{j_s}.
\]
This is guaranteed to be a minimum by the convexity of the function $\phi$.
\end{proof}

\begin{example}
\label{ex:KL}
\begin{figure}
    \centering
    \begin{subfigure}[b]{0.32\textwidth}
        \includegraphics[width=\textwidth]{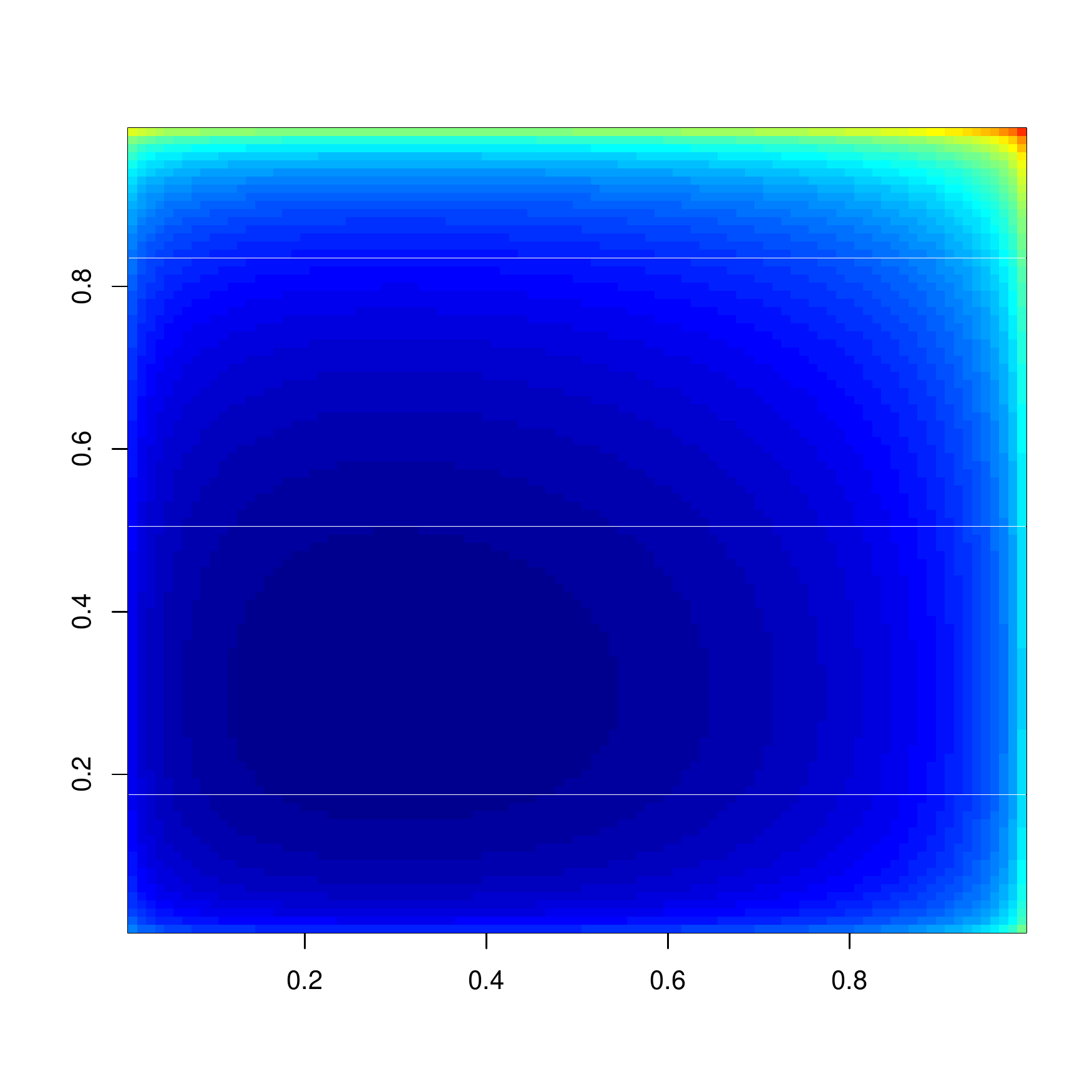}
        \caption{\tiny{Proportional.} \label{prop1}}
    \end{subfigure}
    ~ 
    \begin{subfigure}[b]{0.32\textwidth}
        \includegraphics[width=\textwidth]{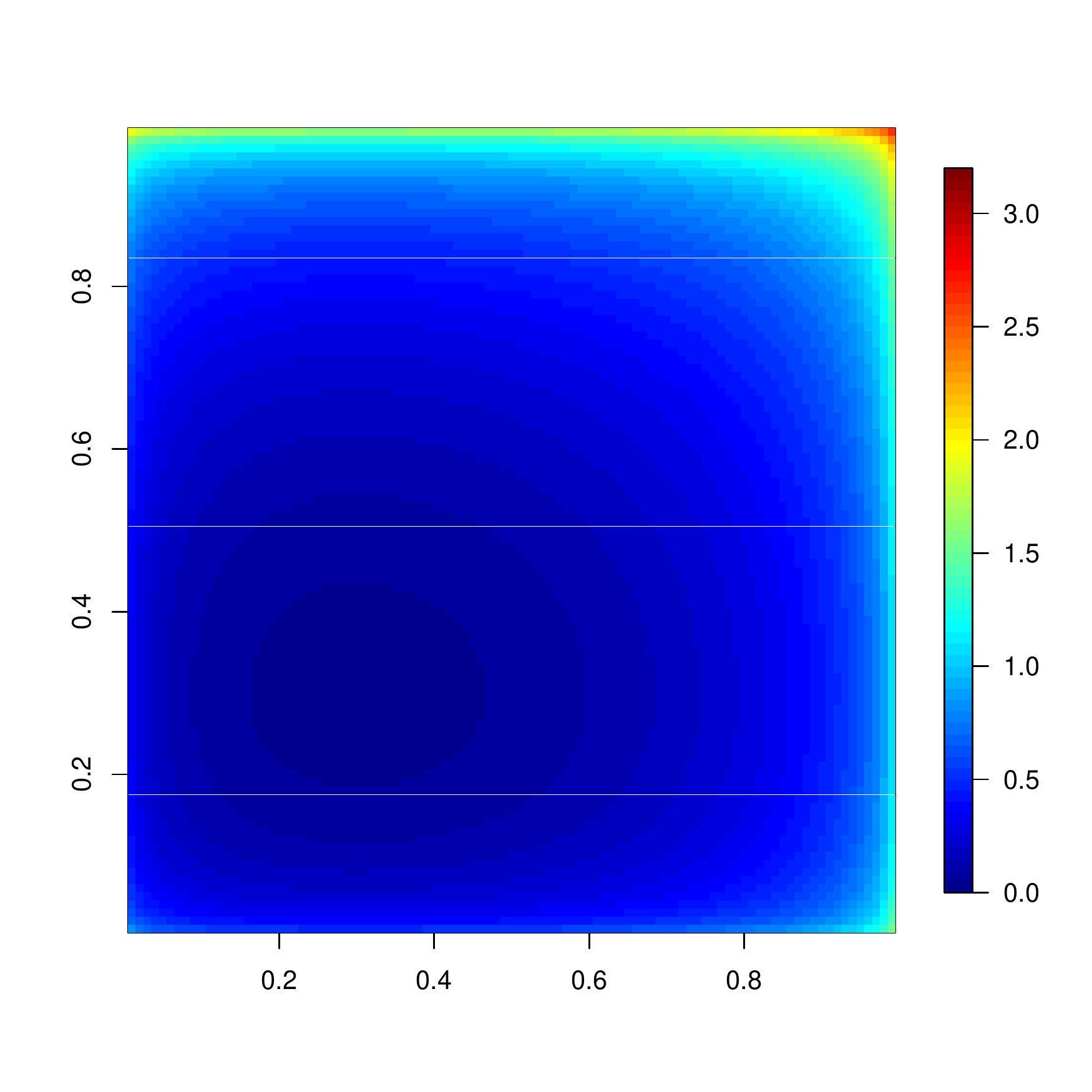}
        \caption{\tiny{Uniform.} \label{uni1}}
    \end{subfigure}
     ~ 
    \begin{subfigure}[b]{0.32\textwidth}
        \includegraphics[width=\textwidth]{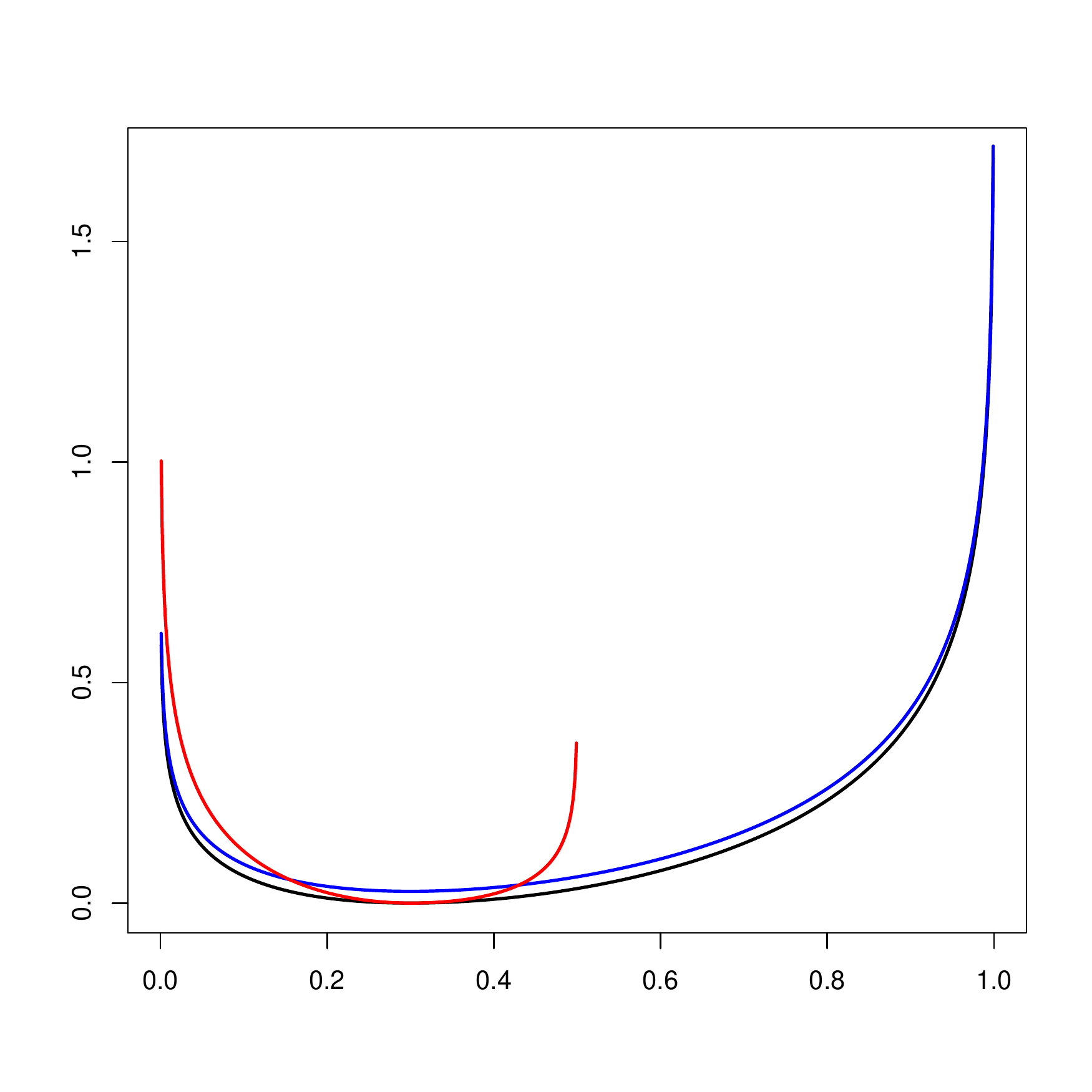}
        \caption{\tiny{One-way analysis.}\label{KL}}
    \end{subfigure}
    \caption{KL divergences for Example \ref{ex:KL} under different covariation schemes: proportional (black), uniform (blue), order-preserving (red).\label{fig:KL}}
    \end{figure}
    
  As an example of a $\phi$-divergence,  in Figure \ref{fig:KL} we plot KL$(\bm{p}_\theta,\bm{p}_{\tilde{\theta}})$ for Example \ref{ex:CSBN} when $\theta_{2_11_1}$ (x-axis) and $\theta_{21_0}$ (y-axis in \ref{prop1} and \ref{uni1}) are varied for the covariation schemes so far considered. From Figures \ref{prop1} and \ref{uni1} we can get an intuition that the KL divergence under proportional covariation is smaller than in the uniform case. This becomes clearer when we only let $\theta_{2_11_1}$ vary as shown in Figure \ref{KL} since the black line is always underneath the others.
\end{example}

\section{One-way sensitivity analysis in non-multilinear models}
\label{sec:pol}
For multilinear models we have been able to provide a unifying framework to perform various sensitivity analyses by deducing closed forms for both sensitivity functions and various divergences. Unfortunately, this is not possible for non-multilinear parametric models because these will depend on the degree of both the indeterminate to be varied and the covaried parameters, which is not necessarily equal to one. However, representing the model through its interpolating polynomial will enable us to study central properties of various sensitivity analyses. Since sensitivity functions and CD distances are most commonly applied, in this section we will only focus on these. Furthermore, because of the much more general structure underlying non-multilinear models, we will restrict our discussion to one-way sensitivity methods. As in Section \ref{sec:covariation}, we let $\theta_i\in\Theta_C$ be varied to $\tilde{\theta}_i$, where $\Theta_C=\{\theta_j: j\in[r]\}$ is the set of parameters including $\theta_i$ which need to respect the sum-to-one condition. 

\subsection{Sensitivity functions}
As in Section \ref{sec:multi} we let $f_{\bm{y}_T}$ denote a sensitivity function and  $f_{\bm{y}_T}^{\bm{y}_O}$ a posterior sensitivity function.

\begin{proposition}
\label{prop:polysens}
Consider a parametric model $\mathbb{P}_\Psi$ with monomial parametrisation $\Psi$. Let $\theta_{i}$ vary to $\tilde{\theta}_i$ and $\theta_j\in\Theta_C\setminus\{\theta_i\}$ covary according to a linear scheme. The sensitivity function $f_{\bm{y}_T}(\tilde{\theta}_i)$ is then a polynomial with degree lower or equal to $d$, where $d=\max_{\bm{\alpha}\in\mathbb{T}_C,j\in[r]}\{\alpha_j\}$. The posterior sensitivity function $f_{\bm{y}_T}^{\bm{y}_O}(\tilde{\theta}_i)$ is a rational function whose numerator and denominator are polynomials again with degree lower or equal to $d$.
\end{proposition}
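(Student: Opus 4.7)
The plan is to follow the same decomposition strategy used in the proof of Proposition \ref{prop:multi}, but to retain the exponents in each atomic monomial instead of collapsing them via multilinearity. First I would write the unnormalised probability as
\[
p_\theta(\bm{y}_T)=\sum_{\bm{\alpha}\in\mathbb{T}}\bm{\theta}^{\bm{\alpha}}=\sum_{\bm{\alpha}\in\mathbb{T}_C}\bm{\theta}_{-C}^{\bm{\alpha}}\prod_{j\in[r]}\theta_j^{\alpha_j}+\sum_{\bm{\alpha}\in\mathbb{T}\setminus\mathbb{T}_C}\bm{\theta}^{\bm{\alpha}},
\]
where $\bm{\theta}_{-C}$ collects the indeterminates outside $\Theta_C$ and $\mathbb{T}_C$ consists of those $\bm{\alpha}\in\mathbb{T}$ with at least one positive exponent on an element of $\Theta_C$. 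The second summand is independent of $\tilde\theta_i$ and contributes only a constant, so the whole analysis reduces to studying the first summand.

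Next, I would substitute the variation $\theta_i\mapsto\tilde\theta_i$ and the linear covariation $\theta_j\mapsto\gamma_j\tilde\theta_i+\delta_j$ for $j\neq i$ into each remaining monomial. Each factor $\theta_i^{\alpha_i}$ becomes $\tilde\theta_i^{\alpha_i}$, a polynomial of degree $\alpha_i$, and each factor $\theta_j^{\alpha_j}$ with $j\neq i$ becomes $(\gamma_j\tilde\theta_i+\delta_j)^{\alpha_j}$, a polynomial of degree $\alpha_j$. The prefactor $\bm{\theta}_{-C}^{\bm{\alpha}}$ is a constant in $\tilde\theta_i$ since $\theta_i\notin\bm{\theta}_{-C}$. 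Hence the contribution of the monomial $\bm{\theta}^{\bm{\alpha}}$ to $f_{\bm{y}_T}(\tilde\theta_i)$ is a polynomial in $\tilde\theta_i$ whose degree is bounded by the largest exponent $\alpha_j$, $j\in[r]$, appearing in that monomial. Taking the maximum over all $\bm{\alpha}\in\mathbb{T}_C$ gives the claimed bound $d=\max_{\bm{\alpha}\in\mathbb{T}_C,\,j\in[r]}\{\alpha_j\}$.

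For the posterior statement, I would use $f_{\bm{y}_T}^{\bm{y}_O}(\tilde\theta_i)=\Pr(\bm Y\in\mathbb{Y}_T\cap\mathbb{Y}_O)/\Pr(\bm Y\in\mathbb{Y}_O)$ and note that both numerator and denominator are of the same algebraic form as $f_{\bm{y}_T}(\tilde\theta_i)$ --- only the indexing set $\mathbb{T}$ is replaced by the exponent set of $\mathbb{Y}_T\cap\mathbb{Y}_O$ and $\mathbb{Y}_O$ respectively. Both are therefore polynomials of degree at most $d$, so the ratio is the claimed rational function. The main obstacle I anticipate is the degree-accounting when several distinct parameters of $\Theta_C$ appear jointly in a single monomial (as happens, for instance, in stationary DBNs by Example \ref{ex:DBN}); here one must verify that the degree contributed by the product $\prod_{j\in[r]}(\gamma_j\tilde\theta_i+\delta_j)^{\alpha_j}$ is correctly bounded by $\max_j\alpha_j$ after expansion, which in general requires the implicit understanding that the parameters in $\Theta_C$ appearing in one atom are covaried so that no genuine compounding of degrees occurs beyond the stated bound.
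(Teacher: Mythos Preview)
Your approach---split off the monomials touching $\Theta_C$, substitute the linear covariation, and count degrees---is exactly the paper's, whose proof is a single sentence observing that no exponent $\alpha_{i,\bm y}$ exceeds $d$. You are, if anything, more explicit than the paper.

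The concern you flag at the end is a genuine gap, not a minor obstacle, and it already bites in your main degree claim. After substitution, the monomial $\prod_{j\in[r]}\theta_j^{\alpha_j}$ becomes $\tilde\theta_i^{\,\alpha_i}\prod_{j\neq i}(\gamma_j\tilde\theta_i+\delta_j)^{\alpha_j}$, whose degree in $\tilde\theta_i$ is $\sum_{j\in[r]}\alpha_j$, \emph{not} $\max_j\alpha_j$; your sentence ``bounded by the largest exponent $\alpha_j$'' is therefore unjustified, and the closing appeal to an ``implicit understanding'' that compounding does not occur is not an argument---linear covariation does compound degrees multiplicatively. The paper's one-line proof does not address this either, and its own Example~\ref{ex:CD1} exhibits atoms with two distinct $\Theta_C$-parameters (e.g.\ $\hat\theta_{2_12_13_0}^{\,2}\hat\theta_{2_02_13_0}$). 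The stated bound $d=\max_{\bm\alpha,j}\alpha_j$ is thus only safe under the extra hypothesis that each atom carries at most one element of $\Theta_C$ with nonzero exponent, or if $d$ is read as $\max_{\bm\alpha\in\mathbb{T}_C}\sum_{j\in[r]}\alpha_j$; in the paper's DBN examples these two quantities happen to coincide, which masks the issue. Your scaffolding is sound; what is missing is making one of these two readings explicit rather than hoping it away.
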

\begin{proof}
This follows by noting that under the conditions of the proposition an exponent $\alpha_{i,\bm{y}}$ in Definition \ref{def:mon} of monomial parametrisation, for any $\bm{\alpha}_{\bm{y}}\in\mathbb{A}$ cannot be larger than $d$. 
\end{proof}

Similar results to Proposition \ref{prop:polysens} were presented in \citep{Charitos2006,Charitos2006a,Renooij2012} for some specific classes of DBNs only. 

The specific form of the sensitivity function in non-multilinear models depends both on the form of the interpolating polynomial, on the parameter to be varied and on the parameters that are consequently covaried. Therefore it is not possible to deduce a unique closed form expression since this will explicitly depend on the degree of all the above indeterminates. However, the interpolating polynomial enables us to identify a straightforward procedure to compute $f_{\bm{y}_T}(\tilde{\theta}_i)$ as follows:
\begin{enumerate}
\item determine the polynomial $c_{\mathbb{P}_\Psi}(\theta,\bm{y}_T)$ for an event $\mathbb{Y}_T$;
\item replace $\theta_i$ by $\tilde{\theta}_i$;
\item replace $\theta_{j}$ by $\sigma(\theta_{j},\tilde{\theta}_i)$ for $\theta_j\in\Theta_C\setminus\{\theta_i\}$.
\end{enumerate}

\begin{table}
\begin{center}
\scalebox{0.9}{
\begin{tabular}{|cccccc|}
\hline
$\hat{\theta}_{1_12_0}=0.4,$ & $\hat{\theta}_{1_12_1}=0.6,$ & $\hat{\theta}_{1_12_2}=0.7,$ &$\hat{\theta}_{3_02_0}=0.9,$&$\hat{\theta}_{3_02_1}=0.6,$&$\hat{\theta}_{3_02_2}=0.2,$\\
 $\hat{\theta}_{2_12_03_0}=0.3,$& $\hat{\theta}_{2_22_03_0}=0.2$& $\hat{\theta}_{2_12_13_0}=0.3$& $\hat{\theta}_{2_22_13_0}=0.5$& $\hat{\theta}_{2_12_23_0}=0.5$& $\hat{\theta}_{2_22_23_0}=0.3$.\\
\hline
\end{tabular}
}
\end{center}
\caption{Probability specifications for Example \ref{ex:sensDBN}. \label{table:prob1}}
\end{table}

\begin{figure}
\begin{center}
\includegraphics[scale=0.25]{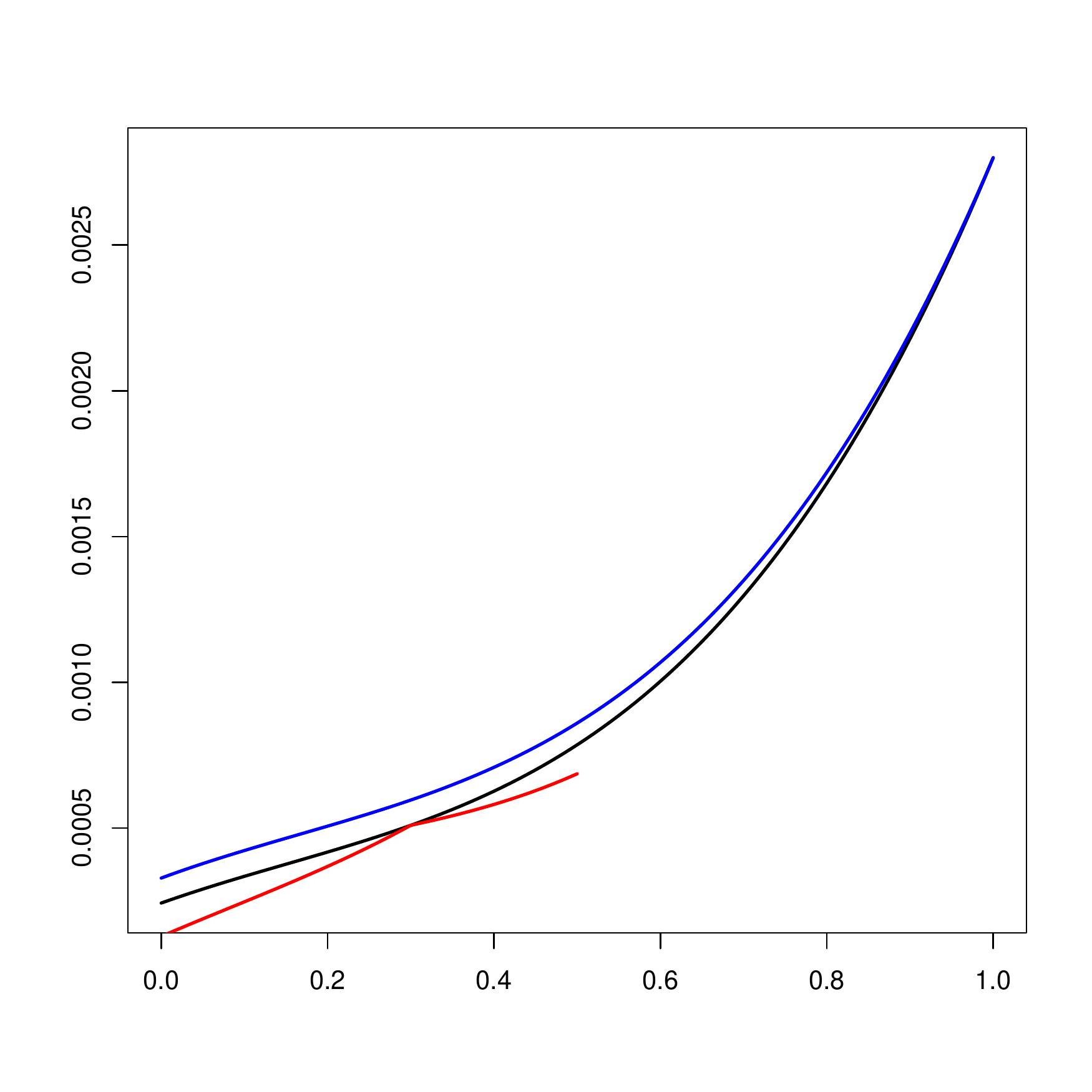}
\end{center}
\caption{Sensitivity functions in Example \ref{ex:sensDBN} when $\hat{\theta}_{2_12_13_0}$ (on the x-axis) is varied: proportional (black), uniform (blue) and order-preserving (red).\label{fig:sensDBN}}
\end{figure}

\begin{example}

\label{ex:sensDBN}

Suppose the definition of the DBN model in Example \ref{ex:DBN} is embellished by the probability specifications in Table \ref{table:prob1}. Suppose further the first time point distribution coincides with one defined in Table \ref{table:prop} for the context-specific BN in Example \ref{ex:CSBN}.  We are now interested in the event $(Y_1(t)=1, Y_2(1)=1, Y_2(4)=1 , Y_3(t)=0, t\in[4])$. By applying the above procedure to the interpolating polynomial of this DBN we can deduce that the sensitivity function when $\hat{\theta}_{2_12_13_0}$ is varied to $x $ equals
\[
ax^3+bx\sigma(\hat{\theta}_{2_02_13_0},x)+cx\sigma(\hat{\theta}_{2_22_13_0},x)+d\sigma(\hat{\theta}_{2_02_13_0},x)+e\sigma(\hat{\theta}_{2_22_13_0},x),
\] 
where $a,b,c,d,e\in[0,1]$. This is plotted in Figure \ref{fig:sensDBN} for different covariation schemes. As formalized in Proposition \ref{prop:polysens} these are not linear in their arguments, but more generally polynomial. As in the multilinear case, we can notice that the probability of interest under order-preserving covariation behaves rather differently than under  uniform and proportional covariation.  
\end{example}

\subsection{CD distance}
\label{why}
We next introduce a general procedure to compute the CD distance for parametric models with monomial parametrisation.
\begin{proposition}
\label{prop:CD}
Let $\bm{p}_\theta,\bm{p}_{\tilde{\theta}}\in\mathbb{P}_\Psi$, where $\mathbb{P}_\Psi$ is a parametric model with monomial parametrisation. The CD distance between $\bm{p}_\theta$ and $\bm{p}_{\tilde{\theta}}$ when $\theta_i$ is varied to $\tilde{\theta}_i$ and $\theta_j\in\Theta_C\setminus\{\theta_i\}$ is covaried to $\tilde{\theta}_j=\sigma(\theta_j,\tilde{\theta}_i)$, $j\in[r]\setminus\{i\}$, according to a valid covariation scheme can be computed from the interpolating polynomial as follows:
\begin{enumerate}
\item set $\bm{\theta}^{\bm{\alpha}}=0$ for all $\bm{\alpha}\in \mathbb{A}\setminus\mathbb{A}_C$;
\item set $\theta_k=1$, if $k\not\in[r]$, in any monomial $\bm{\theta}^{\bm{\alpha}}$, $\bm{\alpha}\in\mathbb{A}_C$;
\item call $\Phi$ the set of remaining monomials, $\bm{\phi}^{\bm{\alpha}}$ a generic element of $\Phi$ and $\tilde{\bm{\phi}}^{\bm{\alpha}}$ its varied version;
\item set $u=\max_{\Phi}\tilde{\bm{\phi}}^{\bm{\alpha}}/\bm{\phi}^{\bm{\alpha}}$ and $l=\min_{\Phi}\tilde{\bm{\phi}}^{\bm{\alpha}}/\bm{\phi}^{\bm{\alpha}}$;
\item compute $\CD(\bm{p}_\theta,\bm{p}_{\tilde{\theta}})=\log(u)-\log(l)$.
\end{enumerate}
It then follows that $\CD(\bm{p}_\theta,\bm{p}_{\tilde{\theta}})=\CD(\bm{p}_\phi,\bm{p}_{\tilde{\phi}})$, where $\bm{p}_\phi$ and $\bm{p}_{\tilde{\phi}}$ denote respectively the original and the varied probability mass function of the parametric model whose atomic probabilities are the elements of  $\Phi$. 
\end{proposition}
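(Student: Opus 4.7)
The plan is to reduce the computation of the full CD distance to one that involves only those atoms whose monomial contains at least one indeterminate in $\Theta_C$. Following the notation introduced for $\mathbb{A}_j$ earlier in the paper, I read $\mathbb{A}_C$ as the set of exponent vectors $\bm{\alpha}\in\mathbb{A}$ for which $\alpha_k\neq 0$ for at least one $k\in[r]$. Starting from the definition
\[
\mathcal{D}_{\CD}(\bm{p}_\theta,\bm{p}_{\tilde\theta})=\log\max_{\bm{\alpha}\in\mathbb{A}}\frac{\tilde{\bm{\theta}}^{\bm{\alpha}}}{\bm{\theta}^{\bm{\alpha}}}-\log\min_{\bm{\alpha}\in\mathbb{A}}\frac{\tilde{\bm{\theta}}^{\bm{\alpha}}}{\bm{\theta}^{\bm{\alpha}}},
\]
I would partition the index set as $\mathbb{A}=\mathbb{A}_C\cup(\mathbb{A}\setminus\mathbb{A}_C)$ and treat the two pieces separately.

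For $\bm{\alpha}\in\mathbb{A}\setminus\mathbb{A}_C$ no indeterminate in the monomial $\bm{\theta}^{\bm{\alpha}}$ is varied or covaried, so the ratio $\tilde{\bm{\theta}}^{\bm{\alpha}}/\bm{\theta}^{\bm{\alpha}}$ equals one. Next I would argue that these atoms can be dropped from the max and min: since $\sigma$ is valid, both $\bm{p}_\theta$ and $\bm{p}_{\tilde\theta}$ sum to one, so $\sum_{\bm{\alpha}}(\tilde{\bm{\theta}}^{\bm{\alpha}}-\bm{\theta}^{\bm{\alpha}})=0$ forces at least one ratio to be $\geq 1$ and at least one to be $\leq 1$. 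Consequently the max and min over $\mathbb{A}$ agree with those over $\mathbb{A}_C$, which justifies step~1 of the procedure; the convention $0/0=1$ built into the CD distance ensures that setting the discarded atomic probabilities to zero does not reintroduce spurious terms.

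For $\bm{\alpha}\in\mathbb{A}_C$ I would factor $\bm{\theta}^{\bm{\alpha}}$ into a contribution coming from indeterminates in $\Theta_C$ and a contribution coming from indeterminates outside $\Theta_C$. Only the first factor changes, so the second factor cancels in $\tilde{\bm{\theta}}^{\bm{\alpha}}/\bm{\theta}^{\bm{\alpha}}$. Step~2 of the procedure substitutes $\theta_k=1$ for every $k\notin[r]$ in these monomials, and the resulting $\bm{\phi}^{\bm{\alpha}}$ and $\tilde{\bm{\phi}}^{\bm{\alpha}}$ then satisfy $\tilde{\bm{\phi}}^{\bm{\alpha}}/\bm{\phi}^{\bm{\alpha}}=\tilde{\bm{\theta}}^{\bm{\alpha}}/\bm{\theta}^{\bm{\alpha}}$ pointwise on $\mathbb{A}_C$. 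Maximizing and minimizing over $\Phi$ therefore returns $\log u-\log l$, which by the previous paragraph equals $\mathcal{D}_{\CD}(\bm{p}_\theta,\bm{p}_{\tilde\theta})$.

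The closing identity $\CD(\bm{p}_\theta,\bm{p}_{\tilde\theta})=\CD(\bm{p}_\phi,\bm{p}_{\tilde\phi})$ is a reinterpretation of this common value: the right-hand side is, by definition, the CD distance within the auxiliary model whose atomic probabilities are the elements of $\Phi$ and whose varied counterpart is $\tilde\Phi$. The main obstacle is not any heavy algebraic manipulation but rather the care needed in step~1, where the validity of $\sigma$ and the probability normalization of both $\bm{p}_\theta$ and $\bm{p}_{\tilde\theta}$ must be combined to guarantee that discarding ratios equal to one never changes the overall max or min; the rest of the argument is a bookkeeping exercise in cancelling unchanged factors across the ratio $\tilde{\bm{\theta}}^{\bm{\alpha}}/\bm{\theta}^{\bm{\alpha}}$.
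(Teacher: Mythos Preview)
Your argument is correct and matches the paper's proof essentially step for step: partition $\mathbb{A}$ into $\mathbb{A}_C$ and its complement, observe the ratios on the complement are identically one, invoke validity to ensure the extrema are attained on $\mathbb{A}_C$, and then cancel the unchanged factors to reduce to the monomials $\bm{\phi}^{\bm{\alpha}}$. The only difference is cosmetic---you justify the ``ratios $\geq 1$ and $\leq 1$'' claim via the global sum-to-one constraint, whereas the paper appeals directly to validity of $\sigma$ on $\Theta_C$---but the content is the same.
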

\begin{proof}
In a multilinear parametric model each atom is associated to a monomial $\bm{\theta}^{\bm{\alpha}}$. For all $\bm{\alpha}\in\mathbb{A}\setminus\mathbb{A}_C$, we have that 
\[
\frac{p_{\tilde{\theta}}(\bm{y})}{p_\theta(\bm{y})}=\frac{\tilde{\bm{\theta}}^{\bm{\alpha}}}{\bm{\theta}^{\alpha}}=1,
\]
 and there will always be ratios smaller or bigger than one because of the validity of the covariation scheme. Therefore, these monomials have no impact on the distance (step 1). For $\bm{\alpha}\in\mathbb{A}_C$, we have that 
\[
\frac{\tilde{\bm{\theta}}^{\bm{\alpha}}}{\bm{\theta}^{\bm{\alpha}}}=\frac{\tilde{\bm{\theta}}_C^{\bm{\alpha}_C}}{\bm{\theta}_C^{\bm{\alpha}_C}}\triangleq\frac{\tilde{\bm{\phi}}^{\bm{\alpha}}}{\bm{\phi}^{\bm{\alpha}}},
\]
where $\bm{\theta}_C=\prod_{j\in[r]}\theta_j$ and $\bm{\alpha}_C\in\mathbb{N}_0^r$ is the associated exponent vector. Therefore the distance depends only on the monomials computed in steps 2 and 3. The result then follows from the definition of the CD distance. 
\end{proof}

\begin{example}
\label{ex:CD1}
By following the procedure in Proposition \ref{prop:CD} we deduce that the set $\Phi$ for the DBN model in Example \ref{ex:DBN} when $\hat{\theta}_{2_12_13_0}$ is varied equals
\begin{multline*}
\Phi=\left\{\hat{\theta}_{2_12_13_0}^3,\hat{\theta}_{2_12_13_0}^2\hat{\theta}_{2_02_13_0},\hat{\theta}_{2_12_13_0}^2\hat{\theta}_{2_22_13_0}\hat{\theta}_{2_12_13_0}^2,\hat{\theta}_{2_02_13_0}^2,\hat{\theta}_{2_22_13_0}^2,\right.\\\left.\hat{\theta}_{2_12_13_0}\hat{\theta}_{2_02_13_0},\hat{\theta}_{2_12_13_0}\hat{\theta}_{2_22_13_0},\hat{\theta}_{2_02_13_0}\hat{\theta}_{2_22_13_0},\hat{\theta}_{2_12_13_0},\hat{\theta}_{2_02_13_0},\hat{\theta}_{2_22_13_0}\right\}
\end{multline*}
The algorithm selects the maximum ratio and the minimum ratio between any of these monomials and their non-varied versions, and then use these in the standard formula of the CD distance. In Figure \ref{fig:CDDBN} we plot the CD distance as a function of $\hat{\theta}_{2_12_13_0}$ under different covariation schemes. This shows that the distance is smallest for the proportional covariation scheme. In Theorem \ref{theossimo} we showed that this is the case for single full CPT sensitivity analyses in multilinear models. Unfortunately, this result does not hold in the non-multilinear case as shown in the following example.

\end{example}

\begin{figure}
\begin{center}
\includegraphics[scale=0.25]{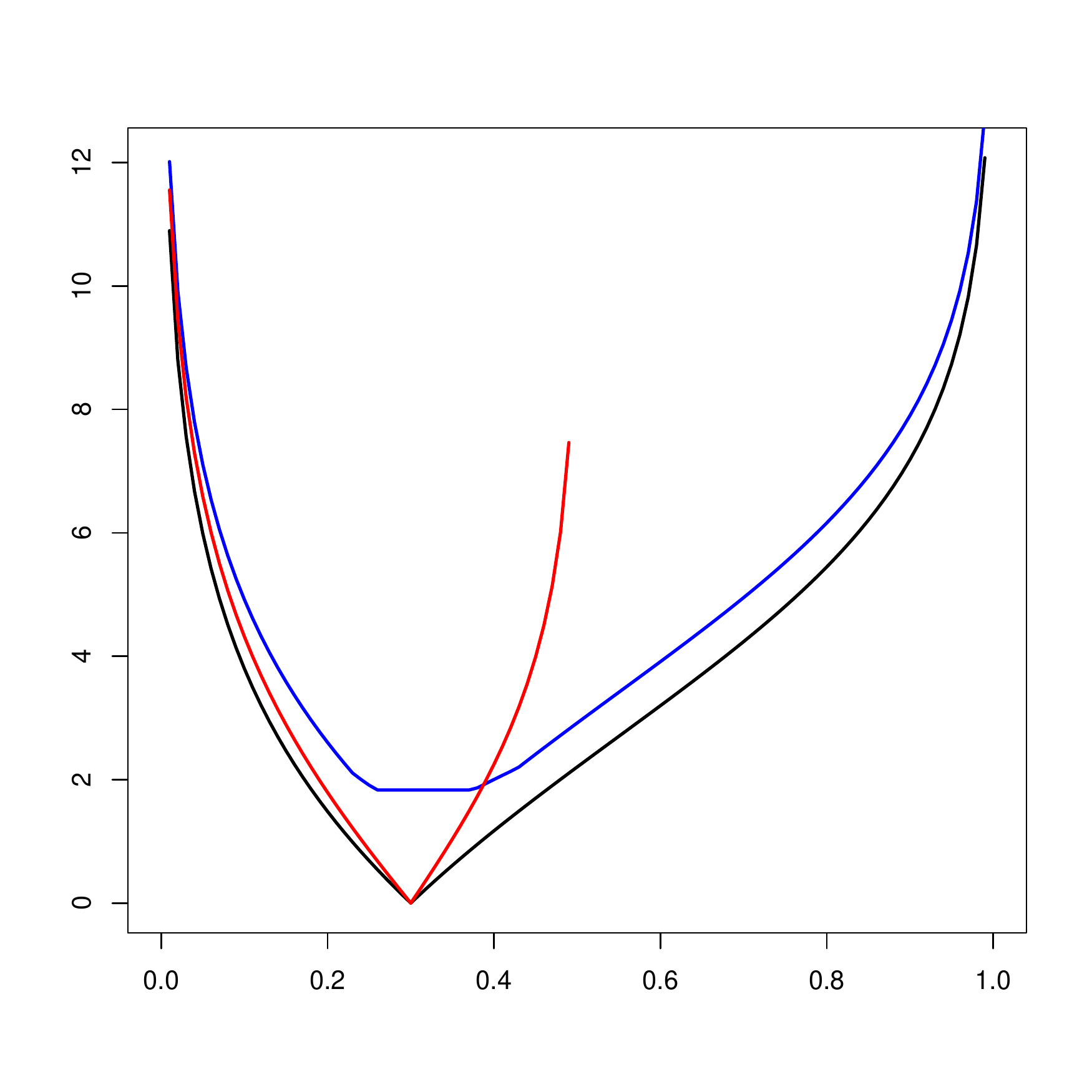}
\end{center}
\caption{CD distance in Example \ref{ex:CD1} when $\hat{\theta}_{2_12_13_0}$ (on the x-axis) is varied: proportional (black), uniform (blue) and order-preserving (red).\label{fig:CDDBN}}
\end{figure}

\begin{example}
\label{ex}
Consider two random variables $Y_1$ and $Y_2$ and suppose $\mathbb{Y}_1=\mathbb{Y}_{2}=[3]$. Suppose also \[
\theta_{1i}=\Pr(Y_1=i)=\Pr(Y_2=i\,|\,Y_1=j), \,\,\,\,\, i\in[3],j\in[2].
\]
 Suppose we let $\theta_{11}$ vary and $\theta_{12}$ and $\theta_{13}$ covary according to a valid scheme. The set $\Phi$ of Proposition \ref{prop:CD} is then equal $\{\theta_{11}^2,\theta_{11}\theta_{12},\theta_{13},\theta_{12}^2,\theta_{11}\theta_{13},\theta_{12}\theta_{13}\}$. Suppose $\theta_{11}=0.33$, $\theta_{12}=0.33$, $\theta_{13}=0.34$ and let $\theta_{11}$ be varied to $0.4$. In this situation the CD distance under a proportional scheme is $2.52$, whilst under a uniform scheme the distance equals $2.50$. For this parameter variation, the uniform scheme would then be preferred to a proportional one if a user wishes to minimize the CD distance. Conversely, if $\theta_{11}$ is set to $0.2$ the distance is smaller under the proportional scheme $(2.89)$ than under the uniform one $(2.92)$.
\end{example}
Therefore, whilst for multilinear models the choice of updating probabilities with a proportional covariation scheme can be justified in terms of some \lq{o}ptimality criterion\rq{} based on the minimization of the CD distance, for non-multilinear models the choice of the covariation scheme becomes critical. Our examples demonstrated that inference can be greatly affected by the chosen covariation scheme. With no theoretical justification to use one over another, any output from such a sensitivity analysis of a non-multilinear model will be ambiguous unless a convincing rationale for the choice of the covariation scheme can be found.

We next deduce a closed form expression for the CD distance under the proportional covariation scheme.
\begin{proposition}
\label{prop:CCD}
Under the condition of Proposition \ref{prop:CD} and assuming a proportional covariation scheme the CD distance is
\begin{equation}
\label{eq:CDD}
\CD(\bm{p}_\theta,\bm{p}_{\tilde{\theta}})=\log\max_{\bm{\alpha}\in\mathbb{A}_{-C}}\left(\frac{1-\tilde{\theta}_i}{1-\theta_i}\right)^{|\bm{\alpha}_{-i}|}\frac{\tilde{\theta_i}^{\alpha_i}}{{\theta_i}^{\alpha_i}}-\log\min_{\bm{\alpha}\in\mathbb{A}_{-C}}\left(\frac{1-\tilde{\theta}_i}{1-\theta_i}\right)^{|\bm{\alpha}_{-i}|}\frac{\tilde{\theta_i}^{\alpha_i}}{{\theta_i}^{\alpha_i}},
\end{equation}
where $\mathbb{A}_{-C}\in\mathbb{N}^r_0$ is the set including the exponents in $\mathbb{A}_C$ where the entries relative to indeterminates not in $\Theta_C$ are deleted, $\bm{\alpha}_{-i}\in\mathbb{N}^{r-1}_0$ is an exponent where the entry relative to $\theta_i$ is deleted and $|\bm{\alpha}_{-i}|$ is the sum of its entries. 
\end{proposition}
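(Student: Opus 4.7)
The plan is to invoke Proposition \ref{prop:CD} to reduce the CD distance computation to taking the maximum and minimum of monomial ratios $\tilde{\bm{\phi}}^{\bm{\alpha}}/\bm{\phi}^{\bm{\alpha}}$ over the restricted monomial set $\Phi$, and then to explicitly evaluate these ratios under the proportional covariation scheme. Since Proposition \ref{prop:CD} tells us that after setting all indeterminates outside $\Theta_C$ to $1$ and discarding the exponents in $\mathbb{A}\setminus\mathbb{A}_C$ the CD distance is unaffected, the monomials that actually enter the computation are indexed precisely by the projected exponent set $\mathbb{A}_{-C}\subseteq\mathbb{N}_0^r$.

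First I would fix a generic $\bm{\alpha}\in\mathbb{A}_{-C}$ and write the corresponding ratio as
\[
\frac{\tilde{\bm{\phi}}^{\bm{\alpha}}}{\bm{\phi}^{\bm{\alpha}}}=\frac{\tilde{\theta}_i^{\alpha_i}}{\theta_i^{\alpha_i}}\prodm{j\in[r]\setminus\{i\}}\frac{\tilde{\theta}_j^{\alpha_j}}{\theta_j^{\alpha_j}}.
\]
The key substitution is then that, under the proportional scheme, $\tilde{\theta}_j=(1-\tilde{\theta}_i)\theta_j/(1-\theta_i)$ for every $j\neq i$, so each factor $\tilde{\theta}_j^{\alpha_j}/\theta_j^{\alpha_j}$ collapses to $((1-\tilde{\theta}_i)/(1-\theta_i))^{\alpha_j}$, independently of the actual values of the covarying parameters. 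Multiplying these across $j\neq i$ yields a single power whose exponent is $\sum_{j\neq i}\alpha_j=|\bm{\alpha}_{-i}|$, giving
\[
\frac{\tilde{\bm{\phi}}^{\bm{\alpha}}}{\bm{\phi}^{\bm{\alpha}}}=\left(\frac{1-\tilde{\theta}_i}{1-\theta_i}\right)^{|\bm{\alpha}_{-i}|}\frac{\tilde{\theta}_i^{\alpha_i}}{\theta_i^{\alpha_i}}.
\]

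With that expression in hand, the final step is simply to apply step 4 of Proposition \ref{prop:CD}, namely $\CD(\bm{p}_\theta,\bm{p}_{\tilde{\theta}})=\log u-\log l$ with $u$ and $l$ the maximum and minimum of these ratios over $\bm{\alpha}\in\mathbb{A}_{-C}$, which reproduces equation (\ref{eq:CDD}) verbatim. I do not expect any serious obstacle: the essential content is already in Proposition \ref{prop:CD}, and the only piece of algebra specific to this proposition is the factorisation of the product of covaried ratios into a single power of $(1-\tilde{\theta}_i)/(1-\theta_i)$ with exponent $|\bm{\alpha}_{-i}|$. The only care required is notational, making sure that $\bm{\alpha}_{-i}$ always denotes the coordinates of $\bm{\alpha}\in\mathbb{A}_{-C}$ after deleting the $i$-th component, so that $|\bm{\alpha}_{-i}|$ indeed records the total degree in the covarying indeterminates.
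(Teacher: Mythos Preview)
Your proposal is correct and follows essentially the same route as the paper: invoke Proposition \ref{prop:CD} to reduce to maximising and minimising $\tilde{\bm{\phi}}^{\bm{\alpha}}/\bm{\phi}^{\bm{\alpha}}$ over $\bm{\alpha}\in\mathbb{A}_{-C}$, substitute the proportional scheme $\tilde{\theta}_j=(1-\tilde{\theta}_i)\theta_j/(1-\theta_i)$ for $j\neq i$, and collapse the product over $j\neq i$ into the single factor $((1-\tilde{\theta}_i)/(1-\theta_i))^{|\bm{\alpha}_{-i}|}$. The only cosmetic difference is that the paper first simplifies the numerator $\tilde{\bm{\phi}}^{\bm{\alpha}}$ and then divides, whereas you simplify the ratio factor by factor; the algebra is identical.
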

\begin{proof}
From Proposition \ref{prop:CD} we can write the CD distance as 
\[
\CD(\bm{p}_\theta,\bm{p}_{\tilde{\theta}})=\log\max_{\bm{\alpha}\in\mathbb{A}_{-C}}\frac{\tilde{\bm{\phi}}^{\bm{\alpha}}}{\bm{\phi}^{\bm{\alpha}}}-\log\min_{\bm{\alpha}\in\mathbb{A}_{-C}}\frac{\tilde{\bm{\phi}  }^{\bm{\alpha}}}{\bm{\phi}^{\bm{\alpha}}}.
\]
Recall that under a proportional scheme an indeterminate $\theta_j\in\Theta_C\setminus\{\theta_i\}$ is varied to $\tilde{\theta}_j=\theta_j(1-\tilde{\theta}_i)/(1-\theta_i)$.
We then have that, for any $\phi\in\Phi$,
\[
\tilde{\bm{\phi}}^{\bm{\alpha}}=\tilde{\theta}_i^{\alpha_i}\prod_{j\in[r]\setminus\{i\}}\left(\frac{1-\tilde{\theta}_i}{1-\theta_i}\theta_j\right)^{\alpha_j}
=\left(\frac{1-\tilde{\theta}_i}{1-\theta_i}\right)^{|\bm{\alpha}_{-i}|}\bm{\theta}_{-i}^{\bm{\alpha}_{-i}}\tilde{\theta}_i^{\alpha_i},
\]
and therefore
\[
\frac{\tilde{\bm{\phi}}^{\bm{\alpha}}}{\bm{\phi}^{\bm{\alpha}}}=\frac{\tilde{\theta}_i^{\alpha_i}}{\theta_i^{\alpha_i}}\left(\frac{1-\tilde{\theta}_i}{1-\theta_i}\right)^{|\bm{\alpha}_{-i}|}.
\]
Then,  since $\CD(\bm{p}_\theta,\bm{p}_{\tilde{\theta}})=\CD(\bm{p}_\phi,\bm{p}_{\tilde{\phi}})$ by Proposition \ref{prop:CD}, we deduce the closed form in equation (\ref{eq:CDD}) by substituting the above expression into the definition of CD distance. 
\end{proof}

\begin{example}
Consider again the monomial set $ \Phi$ of Example \ref{ex:CD1} and assume the parameters $\hat{\theta}_{2_02_13_0}$ and $\hat{\theta}_{2_22_13_0}$ are covaried according to a proportional scheme. Call $\hat{\theta}=\hat{\theta}_{2_12_13_0}$ and let $x$ be its varied version. From Proposition \ref{prop:CCD} we can deduce that the CD distance will then depend on the maximum and minimum value in the set of ratios
\[
\left\{\frac{x^3}{\hat{\theta}^3},\frac{x^2}{\hat{\theta}^2},\frac{x}{\hat{\theta}},\frac{x^2}{\hat{\theta}^2}\left(\frac{1-x}{1-\hat{\theta}}\right),\frac{x}{\hat{\theta}}\left(\frac{1-x}{1-\hat{\theta}}\right),\left(\frac{1-x}{1-\hat{\theta}}\right)^2,\frac{1-x}{1-\hat{\theta}}\right\}.
\]
\end{example}

\section{Discussion}
The definition of a parametric model by its interpolating polynomial has proven useful to investigate how changes in the input probabilities affect an output of interest. We have been able to demonstrate not only that standard results for one-way analyses in BN models are valid for a large number of other models and single full CPT investigations, but also new theoretical justifications for the use of proportional covariation based on a variety of divergence measures. Then, the flexibility of the interpolating polynomial representation enabled us to investigate an even larger class of models, for instance DBNs, extending sensitivity methods to dynamic settings. In this framework both sensitivity functions and CD distances exhibit different properties than in the simpler multilinear case, with the potential of even more informative sensitivity investigations. Importantly, we have been able to produce a new fast procedure to compute the CD distance in non-multilinear models.

Having demonstrated the usefulness of our polynomial approach in single full CPT analyses, we next plan to address the rather more complicated situation of generic multi-way analyses. In particular by representing probabilities in terms of monomials we can relate multi-way analyses in multilinear models to one-way sensitivities in non-multilinear ones. It can be seen that sensitivity functions for multi-way analyses will not simply be multilinear but also include interaction terms. Similarly, the CD distance will be affected by such interactions and not simply correspond to the CD distance of the appropriate CPT. Example \ref{ex} would therefore suggest that the proportional covariation scheme is not optimal in this context. However, we notice that the monomials from multi-way analyses in multilinear models are a subset of those arising from non-multilinear ones. Although these can be of an arbitrary degree, each indeterminate of the monomial will have exponent one by construction. Therefore, there is no conclusive proof of the non-optimality of the proportional scheme in generic multi-way analyses for multilinear models and BNs. However the polynomial representation of probabilities in BNs and related models gives us a promising starting point to start investigating this class of problems.

\section*{Acknowledgements}
M. Leonelli was supported by Capes, C. G\"{o}rgen was supported by the EPSRC grant EP/L505110/1 and J. Q. Smith was supported by the EPSRC grant EP/K039628/1.


\section*{References}
\bibliographystyle{plainnat} 
\bibliography{bib}

\end{document}